\newcommand{\argmin}{\mathop{\rm arg~min}}
\newcommand{\supremum}{\mathop{\rm sup}\limits}
\newcommand{\expectation}{{\mathbb{E}}}
\newcommand{\citet}[1]{\citeauthor{#1}~\shortcite{#1}}
\theoremstyle{definition}
\newtheorem{theorem}{Theorem}
\newtheorem*{theorem*}{Theorem}
\newtheorem*{proof*}{proof}
\newtheorem{definition}[theorem]{Definition}
\newtheorem*{definition*}{Definition}
\newtheorem{lemma}[theorem]{Lemma}
\newtheorem{lemma*}{Lemma}
\newtheorem{proposition}[theorem]{Proposition}
\newtheorem*{proposition*}{Proposition}
\newtheorem{corollary}[theorem]{Corollary}
\newcommand{\cmark}{\ding{51}}
\begin{document}
%
\title{Unsupervised Domain Adaptation Based on Source-guided Discrepancy \thanks{This is a longer version of Kuroki et al.
(2019).}}
\author{Seiichi Kuroki$^{\dag\ddag}$, Nontawat Charoenphakdee$^\dag$, Han Bao$^{\dag\ddag}$, \\ \bf \Large Junya Honda$^{\dag\ddag}$, Issei Sato$^{\dag\ddag}$, Masashi Sugiyama$^{\ddag\dag}$\\
$^\dag$The University of Tokyo, Tokyo, Japan\\
$^\ddag$RIKEN, Tokyo, Japan\\
}
\maketitle
\begin{abstract}
\emph{Unsupervised domain adaptation} is the problem setting where data generating distributions in the source and target domains are different and labels in the target domain are unavailable.
An important question in unsupervised domain adaptation is how to measure the difference between the source and target domains. Existing discrepancy measures for unsupervised domain adaptation either require high computation costs or have no theoretical guarantee. To mitigate these problems, this paper proposes a novel discrepancy measure called \emph{source-guided discrepancy (S-disc)}, which exploits labels in the source domain unlike the existing ones.
As a consequence, S-disc can be computed efficiently with a finite-sample convergence guarantee. In addition, it is shown that S-disc can provide a tighter generalization error bound than the one based on an existing discrepancy measure. Finally, experimental results demonstrate the advantages of S-disc over the existing discrepancy measures. 
\end{abstract}

\section{Introduction}
In the conventional supervised learning framework, we often assume that
the training and test
distributions are the same. However, this assumption may not hold in many practical applications such as natural language processing~\cite{glorot2011domain}, speech recognition~\cite{sun2017unsupervised}, and computer vision~\cite{saito2017asymmetric}. For instance, a personalized spam filter can be trained from the emails of all available users, but the training data may not represent the emails of the target user. 
Such scenarios can be formulated in the framework of \emph{domain adaptation}, which has been studied extensively~\cite{ben2007analysis,mansour2009domain,zhang2012generalization,IEEE-KDE:Pan+Yang:2010,book:Sugiyama+Kawanabe:2012}.
An important challenge in domain adaptation is to find a classifier for a label-scarce target domain by exploiting a label-rich source domain. In particular, there are cases where we only have access to labeled data from the source domain and unlabeled data from the target domain, because annotating labels in the target domain is often time-consuming and expensive~\cite{saito2017maximum}. This problem setting is called \emph{unsupervised domain adaptation}~\cite{ben2007analysis}, which is our interest in this paper. 

Since domain adaptation cannot be performed effectively if
the source and target domains
are too different, an important topic to be addressed is how to measure the difference, or {\it discrepancy}, between the two domains. 
Many discrepancy measures have been used in previous studies such as the  maximum mean discrepancy~\cite{huang2007correcting}, Kullback-Leibler divergence~\cite{sugiyama2008direct}, R\'{e}nyi divergence~\cite{mansour2009multiple}, and Wasserstein distance~\cite{courty2017joint}.

Apart from the discrepancy measures described above, \citet{ben2007analysis} proposed a discrepancy measure for binary classification that explicitly takes a hypothesis class into account. They showed that their discrepancy measure leads to a tighter generalization error bound than the $L_1$
distance, which
does not use information of the hypothesis class. Following this line of research, \citet{mansour2009domain} generalized the discrepancy measure of \citet{ben2007analysis} to arbitrary loss functions, and \citet{germain2013pac} provided a PAC-Bayesian analysis based on the average disagreement of a set of hypotheses. 
These results provided a theoretical foundation for various applications of
domain adaptation based on estimation of source-target discrepancy, including sentiment analysis~\cite{glorot2011domain}, source selection~\cite{bhatt2016multi}, and multi-source domain adaptation~\cite{zhao2018multiple}. 
However, their discrepancy measure considers the worst pair of hypotheses to bound the maximum gap in the loss between the domains, which may lead to a loose generalization error bound and expensive computation costs. 
\citet{zhang2012generalization} and~\citet{mohri2012new} analyzed another discrepancy measure with a generalization error bound. 
Nevertheless, this discrepancy measure cannot be computed without labels in the target domain and therefore is not suitable for unsupervised domain adaptation. 

To alleviate the limitations of the existing discrepancy measures for domain adaptation, we propose a novel discrepancy measure called \emph{source-guided discrepancy (S-disc)}.
By incorporating labels in the source domain, S-disc
provides a tighter generalization error bound than the previously proposed discrepancy measure by \citet{mansour2009domain}.
Furthermore, we provide an estimator of S-disc for binary classification that can be computed efficiently.
We also establish consistency and analyze the convergence rate of the proposed S-disc estimator.

Our main contributions are as follows.
\begin{itemize}
\item We propose a novel discrepancy measure called \emph{source-guided discrepancy (S-disc)}, which uses labels in the source domain to measure the difference between the two domains for unsupervised domain adaptation~(Definition~\ref{def:s-disc}).
\item We propose an efficient algorithm for estimating S-disc for the 0-1 loss~(Algorithm~\ref{alg1}).
\item We show the consistency and elucidate the convergence rate of the estimator of S-disc~(Theorem~\ref{thm:sipm-deviation}).
\item We derive a generalization error bound in the target domain based on S-disc~(Theorem~\ref{thm:generalization-bound}), which is tighter than the existing bound provided by \citet{mansour2009domain}. In addition, we derive a generalization error bound for a finite-sample case~(Theorem~\ref{thm:generalization-bound-with-s-disc}).
\item We demonstrate the effectiveness of S-disc for unsupervised domain adaptation in experiments.
\end{itemize}

\section{Problem Setting and Notation}
In this section, we formulate the unsupervised domain adaptation problem.
Let $\mathcal{X}$ be the input space and $\mathcal{Y}$ be the output space, which is $\{+1,-1\}$ in binary classification. We define a domain as a pair $(P_{\mathrm{D}}, f_{\mathrm{D}})$, where $P_{\mathrm{D}}$ is an input distribution on $\mathcal{X}$ and $f_{\mathrm{D}}: \mathcal{X} \rightarrow \mathcal{Y}$ is a labeling function.
In domain adaptation, we denote the source domain and the target domain as ($P_{\mathrm{S}}$,$f_{\mathrm{S}}$) and ($P_{\mathrm{T}}$,$f_{\mathrm{T}}$), respectively. 
Unlike conventional supervised learning, we focus on the case where ($P_{\mathrm{T}}$,$f_{\mathrm{T}}$) differs from ($P_{\mathrm{S}}$,$f_{\mathrm{S}}$). 
In unsupervised domain adaptation, we are given the following data:
\begin{itemize}
\item Unlabeled data in the target domain: $\mathcal{T} = \{ x_i^{\mathrm{T}}\}_{i=1}^{n_{\mathrm{T}}}$.
\item Labeled data in the source domain: $\mathcal{S} = \{ (x_j^{\mathrm{S}},y_j^{\mathrm{S}})\}_{j=1}^{n_{\mathrm{S}}}$.
\end{itemize}
For simplicity, we denote the input features from the source domain $\{ x_j^{\mathrm{S}}\}_{j=1}^{n_{\mathrm{S}}}$ as $\mathcal{S}_{\mathcal{X}}$ and the empirical distribution corresponding to $P_{\mathrm{T}}$ (resp.~$P_{\mathrm{S}}$) as $\widehat{P}_{\mathrm{T}}$ (resp.~$\widehat{P}_{\mathrm{S}}$).

We denote a loss function $\ell$: $\mathcal{\mathbb{R}} \times \mathcal{\mathbb{R}} \rightarrow \mathbb{R}_{+}$.
For example,
the 0-1 loss is given as $\ell_{01}(y,y^{\prime}) = (1- \mathrm{sign}(yy^{\prime}))/2$.
The expected loss for functions $h,h^{\prime}: \mathcal{X} \rightarrow \mathbb{R}$ and distribution $P_{\mathrm{D}}$ over $\mathcal{X}$ is denoted by $R_{\mathrm{D}}^{\ell}(h,h^{\prime}) =\expectation_{x \sim P_{\mathrm{D}}} [\ell(h(x), h^{\prime}(x))]$.
We also denote an empirical risk as $\widehat{R}_{\mathrm{D}}^{\ell}(h,h^{\prime}) =\expectation_{x \sim \widehat{P}_{\mathrm{D}}} [\ell(h(x), h^{\prime}(x))]$. 
In addition, we define the true risk minimizer and the empirical risk minimizer in a certain domain ($P_{\mathrm{D}}$,$f_{\mathrm{D}}$) in a hypothesis class $\mathcal{H}$ as $h_{\mathrm{D}}^{*}=\argmin_{h \in \mathcal{H}}R_{\mathrm{D}}^{\ell}(h,f_{\mathrm{D}})$ and $\widehat{h}_{\mathrm{D}}=\argmin_{h \in \mathcal{H}}\widehat{R}_{\mathrm{D}}^{\ell}(h,f_{\mathrm{D}})$, respectively.
Here, note that the risk minimizer $h_{\mathrm{D}}^{*}$ is not necessarily equal to the labeling function $f_{\mathrm{D}}$ as we consider a restricted hypothesis class.

The goal in domain adaptation is to find a hypothesis $h$ out of a hypothesis class $\mathcal{H}$ so that it gives as small expected loss as possible in the target domain defined by 
\begin{equation*}
    R_{\mathrm{T}}^{\ell}(h,f_{\mathrm{T}}) = \expectation_{x \sim P_{\mathrm{T}}} [\ell(h(x), f_{\mathrm{T}}(x))].
\end{equation*}

\section{Related Work}
In unsupervised domain adaptation, it is essential to measure the difference between the source and target domains,
because it might degrade the performance to use data collected from a source domain that is far from the target domain~\cite{IEEE-KDE:Pan+Yang:2010}.
Since we cannot access labels from the target domain, it is impossible to measure the difference between the two domains on the basis of the \emph{output space}.
One way is to measure the difference between probability distributions in terms of the \emph{input space}.

We will review such discrepancy measures in this section. First, we introduce the discrepancy measure proposed by \citet{mansour2009domain}, which is defined as
\begin{equation}
\mathrm{disc}_{\mathcal{H}}^{\ell}(P_{\mathrm{T}},P_{\mathrm{S}}) = \supremum_{h,h^{\prime} \in \mathcal{H}}\left| R_{\mathrm{T}}^{\ell}(h,h^{\prime})-R_{\mathrm{S}}^{\ell}(h,h^{\prime})\right|. \label{def:x-disc}
\end{equation}
We call this discrepancy measure $\mathcal{X}$-disc as it does not require labels from the source domain but input features. Note that $\mathcal{X}$-disc takes the hypothesis class $\mathcal{H}$ into account.

\citet{mansour2009domain} showed that the following inequalities hold for any $h, h^{\prime} \in \mathcal{H}$:
\begin{align*}
\left| R_{\mathrm{T}}^{\ell}(h,h^{\prime})-R_{\mathrm{S}}^{\ell}(h,h^{\prime})\right| &\leq \mathrm{disc}_{\mathcal{H}}^{\ell}(P_{\mathrm{T}},P_{\mathrm{S}})\\ &\leq M\cdot L_1(P_{\mathrm{T}},P_{\mathrm{S}}),
\end{align*}
where $M >0$ is a constant and $L_1(\cdot,\cdot)$ is the $L_1$ distance over distributions. 
Therefore, a tighter bound for the difference in the expected loss between the two domains can be obtained by considering a hypothesis class $\mathcal{H}$.
However, a drawback of $\mathcal{X}$-disc is that it considers the worst pair of hypotheses as can be seen from \eqref{def:x-disc}.
This may lead to a loose generalization error bound as we will show later, and also an intractable computation cost for empirical estimation. \citet{ben2007analysis} provided a computationally efficient proxy of $\mathcal{X}$-disc for the 0-1 loss defined as follows:
\begin{equation*}
d_{\mathcal{H}}(P_{\mathrm{T}},P_{\mathrm{S}}) = \supremum_{h \in \mathcal{H}}\left| R_{\mathrm{T}}^{\ell_{01}}(h,1)-R_{\mathrm{S}}^{\ell_{01}}(h,1)\right|.
\end{equation*} 
Although $d_{\mathcal{H}}$ can be computed efficiently, there is no learning guarantee.

There is another variant of
$\mathcal{X}$-disc called the generalized discrepancy~\cite{cortes2015adaptation}.
However, the theoretical analysis therein is only applicable to the regression task and not suitable for binary classification unlike the analysis for $\mathcal{X}$-disc.

Another discrepancy measure $\mathcal{Y}$-disc which also takes $\mathcal{H}$ into account is as follows:
\begin{equation*}
    {\mathcal{Y}\text-\mathrm{disc}}_\mathcal{H}^{\ell}(P_{\mathrm{T}},P_{\mathrm{S}}) = \supremum_{h \in \mathcal{H}}|R_{\mathrm{T}}^{\ell}(h,f_{\mathrm{T}})-R_{\mathrm{S}}^{\ell}(h,f_{\mathrm{S}})|.
\end{equation*}
While $\mathcal{Y}$-disc has been proposed to provide a tighter generalization error bound than $\mathcal{X}$-disc~\cite{mohri2012new,zhang2012generalization}, it requires the labeling function $f_{\mathrm{T}}$ in the test domain which cannot be estimated in unsupervised domain adaptation in general.

\section{Source-guided Discrepancy (S-disc)}
In this section, we propose a novel discrepancy measure called \emph{source-guided discrepancy~(S-disc)} to mitigate the limitations of the existing
discrepancy measures. Later, we will show that S-disc can provide a tighter generalization bound and can be computed efficiently compared with the existing measures. 

We define S-disc as follows, which is obtained by fixing $h^{\prime}$ in the definition~\eqref{def:x-disc} of $\mathcal{X}$-disc to the true risk minimizer $h_{\mathrm{S}}^{*}=\argmin_{h \in \mathcal{H}}R_{\mathrm{S}}^{\ell}(h,f_{\mathrm{S}})$ in the source domain.

\begin{table*}[t]
    \centering
    \caption{
Comparison of S-disc with the existing discrepancy measures for unsupervised domain adaptation in binary classification.
        We assume the hypothesis class $\mathcal{H}$ satisfies~\eqref{assump:rademacher}.
       We
       consider
       the hinge loss for S-disc, $\mathcal{X}$-disc, and $d_{\mathcal{H}}$.
        The computational complexity of S-disc and $d_{\mathcal{H}}$ are based on the empirical hinge loss minimization, which is solved with the kernel support vector machine by the SMO algorithm~\cite{Platt98sequentialminimal}.
        The computational complexity of $\mathcal{X}$-disc is that based on SDP relaxation by the ellipsoid method~\cite{bubeck2015convex} given in Appendix.
        ${\mathcal{Y}\text-\mathrm{disc}}$ is not computable with unlabeled target data. 
    }
    \begin{tabular}{|c||c|c|c|c|} \hline
        & S-disc (proposed) & $d_{\mathcal{H}}$ & $\mathcal{X}$-disc & ${\mathcal{Y}\text-\mathrm{disc}}$ \\ \hline \hline
        taking $\mathcal{H}$ into account & \cmark & \cmark & \cmark & \cmark \\ \hline
         convergence rate & $\mathcal{O}_p\left({n_{\mathrm{T}}}^{-\frac{1}{2}} + {n_{\mathrm{S}}}^{-\frac{1}{2}}\right)$ & $\mathcal{O}_p\left({n_{\mathrm{T}}}^{-\frac{1}{2}} + {n_{\mathrm{S}}}^{-\frac{1}{2}}\right)$ & $\mathcal{O}_p\left({n_{\mathrm{T}}}^{-\frac{1}{2}} + {n_{\mathrm{S}}}^{-\frac{1}{2}}\right)$ & N/A \\ \hline
        target generalization error bound & \cmark & N/A & \cmark & \cmark \\ \hline
        computational complexity & $O\left((n_{\mathrm{T}} + n_{\mathrm{S}})^3\right)$ & $O\left((n_{\mathrm{T}} + n_{\mathrm{S}})^3\right)$ & $O\left((n_{\mathrm{T}} + n_{\mathrm{S}}+d)^8\right)$ & N/A \\ \hline
    \end{tabular}
    \label{tab:comparison}
\end{table*}

\begin{definition}[Source-guided discrepancy]
Let $\mathcal{H}$ be a hypothesis class and let $\ell: \mathbb{R} \times \mathbb{R} \rightarrow \mathbb{R}_{+}$ be a loss function.
S-disc
between two distributions
$P_{\mathrm{D}_1}$ and $P_{\mathrm{D}_2}$ is defined as   
\begin{align}
\varsigma_{\mathcal{H}}^{\ell}(P_{\mathrm{\mathrm{D}_1}},P_{\mathrm{D}_2})
= \supremum_{h \in \mathcal{H}} \left| R_{\mathrm{\mathrm{D}_1}}^{\ell}(h,h_{\mathrm{S}}^*)- R_{\mathrm{D}_2}^{\ell}(h,h_{\mathrm{S}}^*)\right|.
\label{def:s-disc}
\end{align}
\end{definition}

S-disc satisfies the triangular inequality, i.e., $\varsigma_{\mathcal{H}}^{\ell}(P_{\mathrm{D}_1},P_{\mathrm{D}_2}) \leq \varsigma_{\mathcal{H}}^{\ell}(P_{\mathrm{D}_1},P_{\mathrm{D}_3}) + \varsigma_{\mathcal{H}}^{\ell}(P_{\mathrm{D}_3},P_{\mathrm{D}_2}) $, and is symmetric, i.e., $\varsigma_{\mathcal{H}}^{\ell}(P_{\mathrm{D}_1},P_{\mathrm{D}_2}) = \varsigma_{\mathcal{H}}^{\ell}(P_{\mathrm{D}_2},P_{\mathrm{D}_1})$. However, in general, S-disc is not a distance as we may have $\varsigma_{\mathcal{H}}^{\ell}(P_{\mathrm{D}_1},P_{\mathrm{D}_2})=0$ for some $P_{\mathrm{D}_1} \neq P_{\mathrm{D}_2}$.

S-disc has the following three advantages over existing discrepancy measures.
First, the computation cost of S-disc is low since we do not need to consider a pair of hypotheses
unlike $\mathcal{X}$-disc as we can see from
\eqref{def:x-disc} and \eqref{def:s-disc}.
Second, S-disc leads to a tighter bound
as discussed below.
\citet{mansour2009domain} derived a generalization error bound by bounding the difference 
$\left| R_{\mathrm{T}}^{\ell}(h,h_{\mathrm{S}}^{*})- R_{\mathrm{S}}^{\ell}(h,h_{\mathrm{S}}^{*})\right|$ with $\mathcal{X}$-disc.
On the other hand, 
for any $h \in \mathcal{H}$, it is easy to see that from the definition of S-disc
\begin{align} 
\left| R_{\mathrm{T}}^{\ell}(h,h_{\mathrm{S}}^{*})- R_{\mathrm{S}}^{\ell}(h,h_{\mathrm{S}}^{*})\right| &\leq \varsigma_{\mathcal{H}}^{\ell}(P_{\mathrm{T}},P_{\mathrm{S}})\notag\\ &\leq \mathrm{disc}_{\mathcal{H}}^{\ell}(P_{\mathrm{T}},P_{\mathrm{S}}),  \label{eq:1}
\end{align}
which implies that S-disc gives a tighter generalization error bound than $\mathcal{X}$-disc. 
Third, since $h_{\mathrm{S}}^{*}$ is estimated by labeled data from the source domain, we can compute S-disc without labels from the target domain unlike ${\mathcal{Y}\text-\mathrm{disc}}$. 
For these reasons, S-disc is more suitable for unsupervised domain adaptation
than the existing discrepancy measures.
A comparison of S-disc with the existing discrepancy measures is summarized in Table~\ref{tab:comparison}.

\section{S-disc Estimation for the 0-1 Loss}
Here we consider the task of binary classification
with the 0-1 loss, where the output space $\mathcal{Y}$ is
$\{+1, -1\}$.
The following theorem states that S-disc estimation can be reduced to a cost-sensitive classification problem. We consider a symmetric hypothesis class $\mathcal{H}$, which is closed under negation, i.e., for any $h \in \mathcal{H}$, $-h$ is contained in $\mathcal{H}$.
\begin{theorem} \label{th:01lossest}
\label{thm:sipm-estimation}
For the 0-1 loss and a symmetric hypothesis class $\mathcal{H}$,
the following equality holds:
\begin{align*}
\varsigma_{\mathcal{H}}^{\ell_{01}}(\widehat{P}_{\mathrm{T}},\widehat{P}_{\mathrm{S}})= 1 - \min_{h \in \mathcal{H}}J_{\ell_{01}}(h),
\end{align*}
where $J_{\ell}(h)$ is defined as
\begin{align*}
J_{\ell}(h)&=\frac{1}{n_{\mathrm{S}}}\sum_{j=1}^{n_{\mathrm{S}}}\ell(h(x_j^{\mathrm{S}}),h_{\mathrm{S}}^*(x_j^{\mathrm{S}}))\\
&\quad+ \frac{1}{n_{\mathrm{T}}}\sum_{i=1}^{n_{\mathrm{T}}}\ell(h(x_i^{\mathrm{T}}), -h_{\mathrm{S}}^*(x_i^{\mathrm{T}})).
\end{align*}
\end{theorem}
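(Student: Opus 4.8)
The plan is to expand the empirical S-disc, remove the absolute value using the symmetry of $\mathcal{H}$, and then rewrite the two empirical risks so that the quantity inside the supremum becomes exactly $1 - J_{\ell_{01}}(h)$. The whole argument rests on two elementary identities for the 0-1 loss: flipping the sign of either argument complements the loss, i.e.\ $\ell_{01}(-a,b) = 1 - \ell_{01}(a,b)$ and $\ell_{01}(a,-b) = 1 - \ell_{01}(a,b)$, both of which follow directly from $\ell_{01}(a,b) = (1-\mathrm{sign}(ab))/2$.

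First I would write the definition at the empirical level,
\[
\varsigma_{\mathcal{H}}^{\ell_{01}}(\widehat{P}_{\mathrm{T}},\widehat{P}_{\mathrm{S}}) = \supremum_{h \in \mathcal{H}} \left| \widehat{R}_{\mathrm{T}}^{\ell_{01}}(h,h_{\mathrm{S}}^*) - \widehat{R}_{\mathrm{S}}^{\ell_{01}}(h,h_{\mathrm{S}}^*)\right|,
\]
and observe that replacing $h$ by $-h$ complements both empirical risks by the first identity, so that the signed difference $\widehat{R}_{\mathrm{T}}^{\ell_{01}}(h,h_{\mathrm{S}}^*) - \widehat{R}_{\mathrm{S}}^{\ell_{01}}(h,h_{\mathrm{S}}^*)$ merely changes sign. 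Since $\mathcal{H}$ is closed under negation, the map $h \mapsto -h$ is a bijection of $\mathcal{H}$, and hence the supremum of the absolute value equals the supremum of the signed difference with the absolute value dropped.

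Next I would apply the second identity to rewrite the target risk as $\widehat{R}_{\mathrm{T}}^{\ell_{01}}(h,h_{\mathrm{S}}^*) = 1 - \frac{1}{n_{\mathrm{T}}}\sum_{i=1}^{n_{\mathrm{T}}} \ell_{01}(h(x_i^{\mathrm{T}}), -h_{\mathrm{S}}^*(x_i^{\mathrm{T}}))$. Substituting this together with the definition of $\widehat{R}_{\mathrm{S}}^{\ell_{01}}(h,h_{\mathrm{S}}^*)$ collapses the signed difference to precisely $1 - J_{\ell_{01}}(h)$. Taking the supremum then yields $1 - \inf_{h \in \mathcal{H}} J_{\ell_{01}}(h)$; because $\ell_{01}$ takes only finitely many values on the finite samples $\mathcal{S}_{\mathcal{X}}$ and $\mathcal{T}$, the infimum is attained and equals $\min_{h \in \mathcal{H}} J_{\ell_{01}}(h)$, which is the claimed identity.

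The step requiring the most care is the elimination of the absolute value, as this is the only place where the symmetry assumption on $\mathcal{H}$ is genuinely used; I would make explicit that $h \mapsto -h$ is a bijection so that reindexing the supremum is valid. The remaining steps are direct algebraic substitutions using the two loss identities, so I expect no further obstacle.
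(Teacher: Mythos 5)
Your proposal is correct and follows essentially the same route as the paper's own proof: both rely on the two complement identities for the 0-1 loss (flipping the sign of the first argument to drop the absolute value via negation-closedness of $\mathcal{H}$, and flipping the sign of the second argument to turn the target risk into $1$ minus a risk against $-h_{\mathrm{S}}^*$), yielding $1 - \min_{h\in\mathcal{H}} J_{\ell_{01}}(h)$. Your added remark that the infimum is attained because $J_{\ell_{01}}$ takes only finitely many values on the finite samples is a small extra point of rigor that the paper leaves implicit, but it does not change the argument.
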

The proof of Theorem~\ref{th:01lossest} is given in Appendix.
Note that the minimization of $J_{\ell_{01}}(h)$ corresponds to empirical risk minimization for $\mathcal{S}^* = \{(x, {h}_{\mathrm{S}}^*(x)) \mid x \in \mathcal{S}_{\mathcal{X}}\}$ and $\mathcal{T}^* = \{(x, -{h}_{\mathrm{S}}^*(x)) \mid x \in \mathcal{T}\}$.
Therefore, Theorem~\ref{thm:sipm-estimation} naturally suggests a three-step algorithm illustrated in Algorithm~\ref{alg1}, where $h_{\mathrm{S}}^*$ is replaced with $\widehat{h}_{\mathrm{S}}$.
This allows us to compute S-disc by minimizer $h^{\prime\prime}$ of the cost-sensitive risk $J_{\ell_{01}}(h)$, which weights the pseudo-labeled data $\widetilde{\mathcal{S}}$ and $\widetilde{\mathcal{T}}$ with costs $1/n_\mathrm{S}$ and $1/n_\mathrm{T}$, respectively.

Since minimization of the 0-1 loss is computationally hard \cite{ben2003difficulty,feldman2012agnostic}, we use a surrogate loss such as the hinge loss $\ell_{\text{hinge}}(y,y') = \max(0, 1-yy')$~\cite{bartlett2006convexity}.
Note that the 0-1 loss is used for calculating S-disc in the final step, i.e., $1-J_{\ell_{01}}(h^{\prime\prime})$, while a surrogate loss is only used to train a classifier.
Hinge loss minimization with the sequential minimal optimization (SMO) algorithm requires $O((n_{\mathrm{T}} + n_{\mathrm{S}})^3)$ for the entire algorithm~\cite{Platt98sequentialminimal}.
This low computation cost has
a big advantage over
$\mathcal{X}$-disc (see Table~\ref{tab:comparison}).
\begin{algorithm}[t]
\caption{S-disc Estimation for the 0-1 Loss}
\label{alg1}
\SetKwInput{Input}{Input}\SetKwInOut{Output}{Output}
\SetKw{KwInit}{Initialize}
\Input{labeled source data $\mathcal{S}$, unlabeled target data $\mathcal{T}$, surrogate loss $\ell_{\text{sur}}$, hypothesis class $\mathcal{H}$.}
\Output{$\varsigma_{\mathcal{H}}^{\ell}(\widehat{P}_{\mathrm{T}},\widehat{P}_{\mathrm{S}})$.}
\textbf{Source learning}:\\
Learn a classifier $\widehat{h}_{\mathrm{S}}$ using labeled source data $\mathcal{S}_{\mathcal{X}}$.\\
\textbf{Pseudo labeling}:\\
\begin{itemize}
\item $\widetilde{\mathcal{S}} =  \{(x, \mathrm{sign}\circ\widehat{h}_{\mathrm{S}}(x)) \mid x \in \mathcal{S}_{\mathcal{X}}\}$,
\item $\widetilde{\mathcal{T}} =  \{(x, - \mathrm{sign}\circ\widehat{h}_{\mathrm{S}}(x)) \mid x \in \mathcal{T}\}$.
\end{itemize}
\textbf{Cost sensitive learning from pseudo labeled data}\\
Learn another classifier $h^{\prime\prime} \in \mathcal{H}$ using $\widetilde{\mathcal{S}}$ and $\widetilde{\mathcal{T}}$ to minimize the surrogate cost-sensitive risk $J_{\ell_{\text{sur}}}$. \\
\Return $\varsigma_{\mathcal{H}}^{\ell}(\widehat{P}_{\mathrm{T}},\widehat{P}_{\mathrm{S}})$ =  $1-J_{\ell_{01}}(h^{\prime\prime})$.
\\
\end{algorithm}

\section{Theoretical Analysis}
In this section, we show that S-disc can be estimated from finite data with the consistency guarantee.
After that, we show that our proposed S-disc is useful for deriving a tighter generalization error bound than $\mathcal{X}$-disc. 
To derive theoretical results, the \emph{Rademacher complexity} is used, which captures the complexity of a set of functions by measuring the capability of a hypothesis class to correlate with the random noise.

\begin{definition}[Rademacher complexity~\cite{bartlett2002rademacher}]
Let $\mathcal{H}$ be a set of real-valued functions defined over a set $\mathcal{X}$. Given a sample $(x_1,\dots,x_m) \in \mathcal{X}^m$ independently and identically drawn from a distribution $\mu$, the Rademacher complexity of $\mathcal{H}$ is defined as
\begin{equation*}
\mathfrak{R}_{\mu,m}(\mathcal{H}) =  \expectation_{x_1,\dots,x_m}\expectation_{\sigma}\left[ \supremum_{h \in \mathcal{H}} \frac{1}{m}\sum_{i=1}^{m} \sigma_{i} h(x_i) \right],
\end{equation*}
where the inner expectation is taken over $\sigma = (\sigma_1, \dots, \sigma_{m})$ which are mutually independent uniform random variables taking values in $\{+1, -1\}$.
\end{definition}

Hereafter, we use the following notation:
\begin{itemize}
    \item $\mathcal{H}\otimes\mathcal{H} := \{x \mapsto h(x)\cdot h'(x)  \mid h,h' \in \mathcal{H}\}$,
    \item $\ell\circ(\mathcal{H}\otimes\mathcal{H}) := \{x \mapsto \ell\left(h(x),h'(x)\right) \mid h,h' \in \mathcal{H}\}$.
\end{itemize}

\subsection{Consistency of S-disc}
In this section, we show the estimator $\varsigma_{\mathcal{H}}^\ell(\widehat{P}_{\mathrm{T}},\widehat{P}_{\mathrm{S}})$ converges to the true S-disc $\varsigma_{\mathcal{H}}^\ell(P_{\mathrm{T}},P_{\mathrm{S}})$ as the numbers of samples $n_{\mathrm{T}}$ and $n_{\mathrm{S}}$ increase.
The following theorem gives the deviation of the empirical S-disc estimator,
which is a general result that does not depend on the specific choice of the loss $\ell$ and the hypothesis class $\mathcal{H}$.
\begin{theorem}
\label{thm:sipm-deviation}
Assume the loss function $\ell$ is bounded from above by $M > 0$.
For any $\delta \in (0, 1)$, with probability at least $1 - \delta$,
\begin{align}
&\left|\varsigma_{\mathcal{H}}^{\ell}(\widehat{P}_{\mathrm{T}},\widehat{P}_{\mathrm{S}}) - \varsigma_{\mathcal{H}}^{\ell}(P_{\mathrm{T}},P_{\mathrm{S}}) \right| \nonumber \\
&\qquad \le 2\mathfrak{R}_{P_{\mathrm{T}},n_{\mathrm{T}}}(\ell\circ(\mathcal{H}\otimes\mathcal{H})) + 2\mathfrak{R}_{P_{\mathrm{S}},n_{\mathrm{S}}}(\ell\circ(\mathcal{H}\otimes\mathcal{H})) \nonumber \\
&\qquad \quad + M\sqrt{\frac{\log\frac{4}{\delta}}{2n_{\mathrm{T}}}} + M\sqrt{\frac{\log\frac{4}{\delta}}{2n_{\mathrm{S}}}}.
\nonumber
\end{align}
\end{theorem}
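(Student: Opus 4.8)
The plan is to exploit the fact that the second argument $h_{\mathrm{S}}^*$ in S-disc is a \emph{fixed} function, so that S-disc is an integral probability metric over the single-index class $G := \{x \mapsto \ell(h(x),h_{\mathrm{S}}^*(x)) \mid h \in \mathcal{H}\}$, every member of which takes values in $[0,M]$. Since $h_{\mathrm{S}}^* \in \mathcal{H}$, we have $G \subseteq \ell\circ(\mathcal{H}\otimes\mathcal{H})$, which is what will let me replace the complexity of $G$ by the complexity of the larger class appearing in the statement at the very end. First I would peel off the outer supremum using $\left|\sup_h A(h) - \sup_h B(h)\right| \le \sup_h |A(h)-B(h)|$ with $A(h)=\left|\widehat{R}_{\mathrm{T}}^{\ell}(h,h_{\mathrm{S}}^*)-\widehat{R}_{\mathrm{S}}^{\ell}(h,h_{\mathrm{S}}^*)\right|$ and $B(h)$ its population analogue, then apply the reverse triangle inequality $\bigl||a|-|b|\bigr|\le|a-b|$ and the ordinary triangle inequality to obtain
\[
\begin{aligned}
\left|\varsigma_{\mathcal{H}}^{\ell}(\widehat{P}_{\mathrm{T}},\widehat{P}_{\mathrm{S}}) - \varsigma_{\mathcal{H}}^{\ell}(P_{\mathrm{T}},P_{\mathrm{S}})\right| &\le \sup_{h\in\mathcal{H}}\left|\widehat{R}_{\mathrm{T}}^{\ell}(h,h_{\mathrm{S}}^*) - R_{\mathrm{T}}^{\ell}(h,h_{\mathrm{S}}^*)\right| \\
&\quad + \sup_{h\in\mathcal{H}}\left|\widehat{R}_{\mathrm{S}}^{\ell}(h,h_{\mathrm{S}}^*) - R_{\mathrm{S}}^{\ell}(h,h_{\mathrm{S}}^*)\right|.
\end{aligned}
\]
Each summand is now an ordinary uniform deviation of an empirical mean from its expectation over the class $G$ in a single domain.

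Next I would bound each summand by the standard Rademacher uniform-convergence argument. Writing $\Phi_{\mathrm{T}}$ for the target supremum, since each $g_h$ lies in $[0,M]$ a single-sample change perturbs $\Phi_{\mathrm{T}}$ by at most $M/n_{\mathrm{T}}$, so McDiarmid's inequality concentrates $\Phi_{\mathrm{T}}$ around $\expectation[\Phi_{\mathrm{T}}]$, and the usual symmetrization step bounds $\expectation[\Phi_{\mathrm{T}}]$ by $2\mathfrak{R}_{P_{\mathrm{T}},n_{\mathrm{T}}}(G)$; the same holds for the source with $n_{\mathrm{S}}$. Because the absolute value inside each supremum has to be split into its two one-sided suprema, this amounts to four applications of McDiarmid in total (two domains times two directions). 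Running each at confidence level $\delta/4$ and taking a union bound makes the whole event hold with probability at least $1-\delta$, and each application contributes a deviation term $M\sqrt{\log(4/\delta)/(2n)}$; this is precisely how the constant $\log\frac{4}{\delta}$ arises in the two square-root terms.

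Finally I would inflate the complexity: by monotonicity of the Rademacher complexity under the inclusion $G \subseteq \ell\circ(\mathcal{H}\otimes\mathcal{H})$, we have $\mathfrak{R}_{P,n}(G) \le \mathfrak{R}_{P,n}(\ell\circ(\mathcal{H}\otimes\mathcal{H}))$ in each domain, and substituting this into the two Rademacher terms yields the bound exactly as stated.

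The step I expect to require the most care is the decomposition. The empirical S-disc already carries an absolute value \emph{inside} its supremum, so I must apply the reverse triangle inequality to those absolute values before splitting the inner difference across the two domains; and the accounting must genuinely treat each two-sided supremum as a pair of one-sided suprema, so that the union bound covers four failure events and the constant comes out as $\log\frac{4}{\delta}$ rather than $\log\frac{2}{\delta}$.
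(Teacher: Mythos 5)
Your proof is correct, and its skeleton is the same as the paper's: peel the outer suprema via $\left|\sup_h A(h)-\sup_h B(h)\right|\le\sup_h\left|A(h)-B(h)\right|$, use the reverse and ordinary triangle inequalities to split the deviation into two per-domain uniform deviations, then bound each by symmetrization plus McDiarmid, with the four-way union bound producing exactly the $\log\frac{4}{\delta}$ constants. The one genuine difference is \emph{where} the hypothesis class is enlarged. The paper inserts the relaxation from $\sup_{h\in\mathcal{H}}$ to $\sup_{h,h'\in\mathcal{H}}$ \emph{inside} the decomposition chain, so its concentration step runs directly over the pair class $\ell\circ(\mathcal{H}\otimes\mathcal{H})$. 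You instead keep $h_{\mathrm{S}}^*$ fixed throughout, concentrate over the single-index class $G=\{x\mapsto\ell(h(x),h_{\mathrm{S}}^*(x))\mid h\in\mathcal{H}\}$, and only at the very end invoke monotonicity of the (sup-based, absolute-value-free) Rademacher complexity under the inclusion $G\subseteq\ell\circ(\mathcal{H}\otimes\mathcal{H})$, which is valid since $h_{\mathrm{S}}^*\in\mathcal{H}$ by definition of the risk minimizer. Your route in fact proves a slightly stronger intermediate statement---the deviation bound holds with $\mathfrak{R}_{P_{\mathrm{T}},n_{\mathrm{T}}}(G)$ and $\mathfrak{R}_{P_{\mathrm{S}},n_{\mathrm{S}}}(G)$ in place of the pair-class complexities---which you then deliberately weaken to match the theorem as stated. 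What the paper's early enlargement buys in exchange is reusability: the identical chain of inequalities applies verbatim to $\mathcal{X}$-disc (its Proposition on $\mathrm{disc}_\mathcal{H}^\ell$), where the supremum over pairs is intrinsic and cannot be avoided; your argument is specific to S-disc but tighter for it.
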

The proof of this theorem is given in Appendix.

Theorem~\ref{thm:sipm-deviation} guarantees the consistency of $\varsigma_{\mathcal{H}}^{\ell}(\widehat{P}_{\mathrm{T}},\widehat{P}_{\mathrm{S}})$
under the condition that
$\mathfrak{R}_{P_{\mathrm{T}},n_{\mathrm{T}}}(\ell\circ(\mathcal{H}\otimes\mathcal{H}))$ and $\mathfrak{R}_{P_{\mathrm{S}},n_{\mathrm{S}}}(\ell\circ(\mathcal{H}\otimes\mathcal{H}))$ are well-controlled.

To derive a specific convergence rate,
we consider an assumption given by
\begin{align}
    \!\mathfrak{R}_{P_{\mathrm{T}},n_{\mathrm{T}}}(\mathcal{H}\otimes\mathcal{H}) \le \frac{\mathfrak{C}_{\mathcal{H}\otimes\mathcal{H}}}{\sqrt{n_{\mathrm{T}}}}, \;
    \mathfrak{R}_{P_{\mathrm{S}},n_{\mathrm{S}}}(\mathcal{H}\otimes\mathcal{H}) \le \frac{\mathfrak{C}_{\mathcal{H}\otimes\mathcal{H}}}{\sqrt{n_{\mathrm{S}}}},
    \label{assump:rademacher}
\end{align}
for some constant $\mathfrak{C}_{\mathcal{H}\otimes\mathcal{H}} > 0$ depending only on the hypothesis class $\mathcal{H}$.
Lemma \ref{lem:rademacher-bound} below
shows that this assumption is naturally satisfied
in the linear-in-parameter model.
\begin{lemma}
    \label{lem:rademacher-bound}
    Let $\mathcal{H}$ be the linear-in-parameter model class, i.e., $\mathcal{H} := \{x \mapsto w^\top\phi(x) \mid w \in \mathbb{R}^d, \; \|w\|_2 \le \Lambda \}$ for fixed basis functions $\phi: \mathcal{X} \to \mathbb{R}^d$ satisfying $\|\phi\|_\infty \le D_\phi$.
    Then, for any distribution $\mu$ over $\mathcal{X}$ and $m \in \mathbb{N}$,
    $$
        \mathfrak{R}_{\mu, m}(\mathcal{H}\otimes\mathcal{H}) \le \frac{\Lambda^2D_\phi^2}{\sqrt{m}}.
    $$
\end{lemma}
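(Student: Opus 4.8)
The plan is to exploit the bilinear structure of $\mathcal{H}\otimes\mathcal{H}$ under the linear-in-parameter parametrization and to reduce the Rademacher complexity to the expected operator norm of a random matrix. Writing $h(x)=w^\top\phi(x)$ and $h'(x)=v^\top\phi(x)$ with $\|w\|_2,\|v\|_2\le\Lambda$, each product becomes $h(x)h'(x)=w^\top\phi(x)\phi(x)^\top v$. Hence, for a fixed sample $x_1,\dots,x_m$ and Rademacher signs $\sigma$, the inner quantity appearing in the Rademacher complexity is $\frac{1}{m}\sum_{i=1}^m \sigma_i h(x_i)h'(x_i)=w^\top A v$, where $A:=\frac{1}{m}\sum_{i=1}^m \sigma_i\,\phi(x_i)\phi(x_i)^\top$ is a random symmetric matrix. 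Since $\sup_{\|w\|_2\le\Lambda,\,\|v\|_2\le\Lambda} w^\top A v = \Lambda^2\|A\|_{\mathrm{op}}$ (the supremum of a bilinear form over two $\Lambda$-balls equals $\Lambda^2$ times the largest singular value of $A$, attained at the leading singular vectors), taking the outer expectation over $\sigma$ and the sample gives $\mathfrak{R}_{\mu,m}(\mathcal{H}\otimes\mathcal{H})=\Lambda^2\,\mathbb{E}\!\left[\|A\|_{\mathrm{op}}\right]$.

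Next I would control $\mathbb{E}[\|A\|_{\mathrm{op}}]$ without invoking any matrix concentration inequality, by passing to the Frobenius norm through the elementary pointwise bound $\|A\|_{\mathrm{op}}\le\|A\|_F$. Setting $M_i:=\phi(x_i)\phi(x_i)^\top$, so that $\|A\|_F=\frac{1}{m}\|\sum_i\sigma_i M_i\|_F$, Jensen's inequality gives $\mathbb{E}_\sigma\|\sum_i\sigma_i M_i\|_F \le (\mathbb{E}_\sigma\|\sum_i\sigma_i M_i\|_F^2)^{1/2}$. Expanding the square as $\sum_{i,j}\mathbb{E}[\sigma_i\sigma_j]\langle M_i,M_j\rangle_F$ and using the orthogonality relation $\mathbb{E}[\sigma_i\sigma_j]=\delta_{ij}$ collapses all cross terms, leaving $\mathbb{E}_\sigma\|\sum_i\sigma_i M_i\|_F \le (\sum_i\|M_i\|_F^2)^{1/2}$.

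Finally, I would bound each Frobenius norm: since $M_i$ is rank one, $\|M_i\|_F=\|\phi(x_i)\|_2^2\le D_\phi^2$ by the feature bound $\|\phi(x_i)\|_2\le D_\phi$. Chaining the three steps yields, pointwise in the sample, $\mathbb{E}_\sigma\|A\|_{\mathrm{op}}\le\frac{1}{m}\sqrt{m\,D_\phi^4}=D_\phi^2/\sqrt{m}$; because this bound is uniform in $x_1,\dots,x_m$, the outer expectation over the sample preserves it, and multiplying by $\Lambda^2$ produces the claimed $\Lambda^2 D_\phi^2/\sqrt{m}$. The only genuinely substantive step is the reduction to the operator norm together with the decision to bound it by the Frobenius norm: a direct spectral-norm estimate of $A$ would call for matrix-Khintchine type machinery and would generically cost an extra $\sqrt{\log d}$ factor, whereas the Frobenius route is elementary and dimension-free, which is precisely what is needed to obtain the stated bound with no dependence on $d$.
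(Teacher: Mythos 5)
Your proof is correct and follows essentially the same route as the paper's: both reduce the supremum over $\mathcal{H}\otimes\mathcal{H}$ to a bilinear form in the matrix $\sum_i \sigma_i \phi(x_i)\phi(x_i)^\top$, bound it by $\Lambda^2$ times the Frobenius norm, and then apply Jensen's inequality together with the Rademacher orthogonality $\expectation[\sigma_i\sigma_j]=\delta_{ij}$ and the rank-one identity $\|\phi_i\phi_i^\top\|_{\mathbb{F}}=\|\phi_i\|_2^2\le D_\phi^2$. The only cosmetic difference is that you pass through the exact identity $\sup_{w,v} w^\top A v = \Lambda^2\|A\|_{\mathrm{op}}$ before relaxing to the Frobenius norm, whereas the paper gets there directly by two applications of Cauchy--Schwarz.
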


The proof of Lemma~\ref{lem:rademacher-bound} is given in Appendix.
Subsequently, we derive the convergence rate bound for the 0-1 loss.

\begin{corollary}
    \label{cor:01-sipm-rate}
When we consider
$\ell= \ell_{01}$,
it holds for any $\delta\in(0,1)$
that, with probability at least $1 - \delta$,
\begin{align*}
        &\left|\varsigma_{\mathcal{H}}^{\ell}(\widehat{P}_{\mathrm{T}},\widehat{P}_{\mathrm{S}}) - \varsigma_{\mathcal{H}}^{\ell}(P_{\mathrm{T}},P_{\mathrm{S}}) \right| \\
      &\quad \le \frac{\mathfrak{C}_{\mathcal{H}\otimes\mathcal{H}}}{\sqrt{n_{\mathrm{T}}}} + \frac{\mathfrak{C}_{\mathcal{H}\otimes\mathcal{H}}}{\sqrt{n_{\mathrm{S}}}} + \sqrt{\frac{\log\frac{4}{\delta}}{2n_{\mathrm{T}}}} + \sqrt{\frac{\log\frac{4}{\delta}}{2n_{\mathrm{S}}}}
    \end{align*}
under the assumption in \eqref{assump:rademacher}.
\end{corollary}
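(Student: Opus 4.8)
The plan is to derive the corollary directly from Theorem~\ref{thm:sipm-deviation} by specializing to $\ell = \ell_{01}$ and then controlling the two Rademacher terms via the assumption in~\eqref{assump:rademacher}. First I would set $M = 1$, which is valid because the $0$-$1$ loss takes values in $\{0,1\} \subseteq [0,1]$; this immediately turns the last two terms of Theorem~\ref{thm:sipm-deviation} into $\sqrt{\log(4/\delta)/(2n_{\mathrm{T}})} + \sqrt{\log(4/\delta)/(2n_{\mathrm{S}})}$, which already match the corresponding terms in the corollary. What remains is to relate $\mathfrak{R}_{\mu,m}(\ell_{01}\circ(\mathcal{H}\otimes\mathcal{H}))$ to $\mathfrak{R}_{\mu,m}(\mathcal{H}\otimes\mathcal{H})$ for both $(\mu,m) = (P_{\mathrm{T}}, n_{\mathrm{T}})$ and $(\mu,m) = (P_{\mathrm{S}}, n_{\mathrm{S}})$.

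The key step is to show $\mathfrak{R}_{\mu,m}(\ell_{01}\circ(\mathcal{H}\otimes\mathcal{H})) = \tfrac{1}{2}\mathfrak{R}_{\mu,m}(\mathcal{H}\otimes\mathcal{H})$. In the binary-classification setting the relevant hypotheses are $\{+1,-1\}$-valued (the estimation procedure of Algorithm~\ref{alg1} uses $\mathrm{sign}\circ\widehat{h}_{\mathrm{S}}$), so for any $h,h' \in \mathcal{H}$ the product $h(x)h'(x)$ lies in $\{+1,-1\}$ and $\mathrm{sign}(h(x)h'(x)) = h(x)h'(x)$. Hence $\ell_{01}(h(x),h'(x)) = \tfrac{1}{2}(1 - h(x)h'(x))$, meaning every element of $\ell_{01}\circ(\mathcal{H}\otimes\mathcal{H})$ is the affine image $g \mapsto \tfrac{1}{2} - \tfrac{1}{2}g$ of a function $g \in \mathcal{H}\otimes\mathcal{H}$. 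I would then invoke two elementary properties of the Rademacher complexity: (i) it is invariant under adding a fixed constant, because that constant factors out of the supremum and contributes $\tfrac{c}{m}\expectation_\sigma[\sum_i \sigma_i] = 0$; and (ii) it satisfies $\mathfrak{R}_{\mu,m}(\lambda\mathcal{F}) = |\lambda|\mathfrak{R}_{\mu,m}(\mathcal{F})$, where the case $\lambda < 0$ uses that $\sigma$ and $-\sigma$ are identically distributed. Applying (i) with $c = \tfrac{1}{2}$ and (ii) with $\lambda = -\tfrac{1}{2}$ yields the claimed factor of $\tfrac{1}{2}$.

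Finally I would assemble the pieces: substituting the factor-$\tfrac{1}{2}$ identity into the two Rademacher terms of Theorem~\ref{thm:sipm-deviation} cancels the leading $2$ (since $2 \cdot \tfrac{1}{2} = 1$), leaving $\mathfrak{R}_{P_{\mathrm{T}},n_{\mathrm{T}}}(\mathcal{H}\otimes\mathcal{H}) + \mathfrak{R}_{P_{\mathrm{S}},n_{\mathrm{S}}}(\mathcal{H}\otimes\mathcal{H})$, and then bounding each of these by $\mathfrak{C}_{\mathcal{H}\otimes\mathcal{H}}/\sqrt{n_{\mathrm{T}}}$ and $\mathfrak{C}_{\mathcal{H}\otimes\mathcal{H}}/\sqrt{n_{\mathrm{S}}}$ via~\eqref{assump:rademacher} gives exactly the stated bound. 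I expect the only delicate point to be the reduction in the second paragraph: one must justify treating the hypotheses as $\{+1,-1\}$-valued so that $\mathrm{sign}$ can be removed, since otherwise $\ell_{01}\circ(\mathcal{H}\otimes\mathcal{H})$ involves the non-Lipschitz $\mathrm{sign}$ function and the clean affine relationship---and hence the precise constant $\tfrac{1}{2}$---would not hold. Everything after this reduction is routine.
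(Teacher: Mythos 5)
Your proposal is correct and takes essentially the same approach as the paper: specialize Theorem~\ref{thm:sipm-deviation} to $\ell=\ell_{01}$ with $M=1$, apply the assumption in~\eqref{assump:rademacher}, and use the identity $\mathfrak{R}_{\mu,m}(\ell_{01}\circ(\mathcal{H}\otimes\mathcal{H})) = \tfrac{1}{2}\mathfrak{R}_{\mu,m}(\mathcal{H}\otimes\mathcal{H})$, which the paper simply cites (Lemma~3.1 of Mohri et al., 2012) while you prove it inline via the affine rewriting $\ell_{01}(h(x),h'(x)) = \tfrac{1}{2}\left(1 - h(x)h'(x)\right)$ together with translation invariance and scaling of the Rademacher complexity. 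The delicate point you flag---that this rewriting requires $\{\pm 1\}$-valued hypotheses so that $\mathrm{sign}$ can be dropped---is exactly the hypothesis under which the cited lemma holds, so your treatment matches the paper's.
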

\begin{proof}
    It simply follows from Theorem~\ref{thm:sipm-deviation},
the assumption in \eqref{assump:rademacher},
and the fact $\mathfrak{R}_{\mu,k}(\ell_{01}\circ(\mathcal{H}\otimes\mathcal{H})) = \frac{1}{2}\mathfrak{R}_{\mu,k}(\mathcal{H}\otimes\mathcal{H})$ for any distribution $\mu$ and $k > 0$~\cite[Lemma 3.1]{Mohri:2012}.
\end{proof}

From this corollary, we see that
the empirical S-disc has the consistency
with convergence rate $\mathcal{O}_p({n_{\mathrm{T}}}^{-1/2} + {n_{\mathrm{S}}}^{-1/2})$ under a mild condition.

\subsection{Generalization Error Bound}
In the previous section, we showed that
S-disc $\varsigma_{\mathcal{H}}^{\ell}(P_{\mathrm{T}},P_{\mathrm{S}})$
can be estimated by
$\varsigma_{\mathcal{H}}^{\ell}(\widehat{P}_{\mathrm{T}},\widehat{P}_{\mathrm{S}})$.
In this section, we give
two bounds on the generalization error
$R_{\mathrm{T}}^{\ell}(h,f_{\mathrm{T}})$
for the target domain
in terms of
$\varsigma_{\mathcal{H}}^{\ell}(P_{\mathrm{T}},P_{\mathrm{S}})$
or
$\varsigma_{\mathcal{H}}^{\ell}(\widehat{P}_{\mathrm{T}},\widehat{P}_{\mathrm{S}})$.

The first bound shows the relationship between target risk $R_{\mathrm{T}}^{\ell}(h,f_{\mathrm{T}})$ and source risk $R_{\mathrm{S}}^{\ell}(h,h_{\mathrm{S}}^{*})$.
\begin{theorem}
\label{thm:generalization-bound}
Assume that $\ell$ obeys the triangular inequality, i.e.,
$
\ell(u,v) \le \ell(u,w) + \ell(w,v), \; \forall u, v, w \in \mathbb{R},
$
such as the 0-1 loss. 
Then, for any hypothesis $h \in \mathcal{H}$, 
\begin{align}
&R_{\mathrm{T}}^{\ell}(h,f_{\mathrm{T}})- R_{\mathrm{T}}^{\ell}(h_{\mathrm{T}}^{*},f_{\mathrm{T}})\notag \\
&\leq  R_{\mathrm{S}}^{\ell}(h,h_{\mathrm{S}}^{*})+ R_{\mathrm{T}}^{\ell}(h_{\mathrm{S}}^*,h_{\mathrm{T}}^{*})+\varsigma_{\mathcal{H}}^{\ell}(P_{\mathrm{T}},P_{\mathrm{S}}). {\label{eq:sipm_bound}}
\end{align}
\end{theorem}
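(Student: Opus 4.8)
The plan is to bound the target risk $R_{\mathrm{T}}^{\ell}(h,f_{\mathrm{T}})$ directly by a short chain of triangle inequalities that routes through the two risk minimizers $h_{\mathrm{S}}^*$ and $h_{\mathrm{T}}^*$, and then to convert the single cross-domain term that appears into an S-disc term via the defining property~\eqref{def:s-disc}. First I would record that the expected risk inherits the triangle inequality from the pointwise assumption on $\ell$: applying $\expectation_{x\sim P_{\mathrm{D}}}[\,\cdot\,]$ to both sides of $\ell(h(x),h'(x)) \le \ell(h(x),h''(x)) + \ell(h''(x),h'(x))$ and using monotonicity of the expectation yields $R_{\mathrm{D}}^{\ell}(h,h') \le R_{\mathrm{D}}^{\ell}(h,h'') + R_{\mathrm{D}}^{\ell}(h'',h')$ for every $h,h',h''$ and every domain $\mathrm{D}$.

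Next I would apply this expected triangle inequality to $R_{\mathrm{T}}^{\ell}(h,f_{\mathrm{T}})$ twice. Inserting $h_{\mathrm{S}}^*$ gives
\[
R_{\mathrm{T}}^{\ell}(h,f_{\mathrm{T}}) \le R_{\mathrm{T}}^{\ell}(h,h_{\mathrm{S}}^*) + R_{\mathrm{T}}^{\ell}(h_{\mathrm{S}}^*,f_{\mathrm{T}}),
\]
and inserting $h_{\mathrm{T}}^*$ into the second term gives $R_{\mathrm{T}}^{\ell}(h_{\mathrm{S}}^*,f_{\mathrm{T}}) \le R_{\mathrm{T}}^{\ell}(h_{\mathrm{S}}^*,h_{\mathrm{T}}^*) + R_{\mathrm{T}}^{\ell}(h_{\mathrm{T}}^*,f_{\mathrm{T}})$. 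These two insertions already isolate the irreducible term $R_{\mathrm{T}}^{\ell}(h_{\mathrm{T}}^*,f_{\mathrm{T}})$ that will be subtracted on the left-hand side, together with the cross-optimal term $R_{\mathrm{T}}^{\ell}(h_{\mathrm{S}}^*,h_{\mathrm{T}}^*)$ that appears verbatim in the claimed bound.

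The only remaining term is $R_{\mathrm{T}}^{\ell}(h,h_{\mathrm{S}}^*)$, a target risk that must be traded for the computable source risk $R_{\mathrm{S}}^{\ell}(h,h_{\mathrm{S}}^*)$. Here I would invoke S-disc: since $h \in \mathcal{H}$ and the second argument is pinned to $h_{\mathrm{S}}^*$, the definition~\eqref{def:s-disc} gives $\left| R_{\mathrm{T}}^{\ell}(h,h_{\mathrm{S}}^*) - R_{\mathrm{S}}^{\ell}(h,h_{\mathrm{S}}^*)\right| \le \varsigma_{\mathcal{H}}^{\ell}(P_{\mathrm{T}},P_{\mathrm{S}})$, hence $R_{\mathrm{T}}^{\ell}(h,h_{\mathrm{S}}^*) \le R_{\mathrm{S}}^{\ell}(h,h_{\mathrm{S}}^*) + \varsigma_{\mathcal{H}}^{\ell}(P_{\mathrm{T}},P_{\mathrm{S}})$. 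Substituting both bounds back into the chain and subtracting $R_{\mathrm{T}}^{\ell}(h_{\mathrm{T}}^*,f_{\mathrm{T}})$ from both sides reproduces~\eqref{eq:sipm_bound} exactly.

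I do not expect a genuinely hard step: the argument is a two-stage triangle-inequality decomposition followed by a single use of the S-disc bound. The point that deserves care—and where the source-guided construction is essential—is that every hypothesis fed into the S-disc supremum must lie in $\mathcal{H}$; this holds because the free hypothesis $h$ is assumed to be in $\mathcal{H}$ and $h_{\mathrm{S}}^*$ is a minimizer over $\mathcal{H}$, so the supremum in~\eqref{def:s-disc} legitimately dominates the gap for our particular $h$. I would also emphasize that the routing order (through $h_{\mathrm{S}}^*$ first, then $h_{\mathrm{T}}^*$) is exactly what forces the cross-domain discrepancy to appear with its second argument fixed at $h_{\mathrm{S}}^*$, the single form that S-disc controls; a different insertion order would instead surface a worst-pair difference requiring $\mathcal{X}$-disc, which is precisely the looseness~\eqref{eq:1} lets us avoid.
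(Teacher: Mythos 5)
Your proof is correct and follows essentially the same route as the paper: the triangle inequality applied through $h_{\mathrm{S}}^*$ and $h_{\mathrm{T}}^*$ to obtain $R_{\mathrm{T}}^{\ell}(h,f_{\mathrm{T}}) \le R_{\mathrm{T}}^{\ell}(h,h_{\mathrm{S}}^*) + R_{\mathrm{T}}^{\ell}(h_{\mathrm{S}}^*,h_{\mathrm{T}}^*) + R_{\mathrm{T}}^{\ell}(h_{\mathrm{T}}^*,f_{\mathrm{T}})$, followed by the definition of S-disc to replace $R_{\mathrm{T}}^{\ell}(h,h_{\mathrm{S}}^*)$ with $R_{\mathrm{S}}^{\ell}(h,h_{\mathrm{S}}^*) + \varsigma_{\mathcal{H}}^{\ell}(P_{\mathrm{T}},P_{\mathrm{S}})$. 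The only differences are presentational (you make explicit the lifting of the pointwise triangle inequality to expectations and the two-step insertion, which the paper compresses into one line).
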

\begin{proof}
Since
$$
R_{\mathrm{T}}^\ell(h,f_{\mathrm{T}}) \le R_{\mathrm{T}}^\ell(h,h_{\mathrm{S}}^*) + R_{\mathrm{T}}^\ell(h_{\mathrm{S}}^*,h_{\mathrm{T}}^*) + R_{\mathrm{T}}^\ell(h_{\mathrm{T}}^*,f_{\mathrm{T}})
$$
holds from the triangular inequality,
we have
\begin{align*}
&R_{\mathrm{T}}^{\ell}(h,f_{\mathrm{T}})- R_{\mathrm{T}}^{\ell}(h_{\mathrm{T}}^{*},f_{\mathrm{T}})\notag\\
&\leq R_{\mathrm{T}}^{\ell}(h,h_{\mathrm{S}}^{*}) +R_{\mathrm{T}}^{\ell}(h_{\mathrm{S}}^{*},h_{\mathrm{T}}^{*}) \\
&\leq  R_{\mathrm{S}}^{\ell}(h,h_{\mathrm{S}}^{*})+ R_{\mathrm{T}}^{\ell}(h_{\mathrm{S}}^{*},h_{\mathrm{T}}^{*})+ \varsigma_{\mathcal{H}}^{\ell}(P_{\mathrm{T}},P_{\mathrm{S}}),
\end{align*}
where the last inequality follows
from the definition of S-disc.
\end{proof}

The LHS of \eqref{eq:sipm_bound} represents the regret arising from the use of hypothesis $h$ instead of $h_{\mathrm{T}}^{*}$ in the target domain. 
Theorem~\ref{thm:generalization-bound} shows that
the regret
$R_{\mathrm{T}}^{\ell}(h,f_{\mathrm{T}})-R_{\mathrm{T}}^{\ell}(h_{\mathrm{T}}^{*},f_{\mathrm{T}})$
is bounded by three terms:
(i) the expected loss with respect to $h_{\mathrm{S}}^{*}$ in the source domain,
(ii) the difference between $h_{\mathrm{T}}^{*}$ and $h_{\mathrm{S}}^{*}$ in the target domain, and
(iii) S-disc between $P_{\mathrm{T}}$ and $P_{\mathrm{S}}$.
Note that if the source and target domains are sufficiently close, we can expect the second term $R_{\mathrm{T}}^{\ell}(h_{\mathrm{S}}^{*},h_{\mathrm{T}}^{*})$ and the third term $\varsigma_{\mathcal{H}}^{\ell}(P_{\mathrm{T}},P_{\mathrm{S}})$ to be small.
This fact indicates that for an appropriate source domain, minimization of estimation error $  R_{\mathrm{S}}^{\ell}(h,h_{\mathrm{S}}^{*})$ in the source domain leads to a better generalization in the target domain.

We can see an advantage of
the generalization error bound
based on S-disc through comparison with
the bound based on
$\mathcal{X}$-disc~\cite[Theorem 8]{mansour2009domain} given by
\begin{align}
&R_{\mathrm{T}}^{\ell}(h,f_{\mathrm{T}}) - R_{\mathrm{T}}^{\ell}(h_{\mathrm{T}}^{*},f_{\mathrm{T}})\notag  \\
&\leq  R_{\mathrm{S}}^{\ell}(h,h_{\mathrm{S}}^{*})+ R_{\mathrm{T}}^{\ell}(h_{\mathrm{T}}^{*},h_{\mathrm{S}}^{*})+ \mathrm{disc}_{\mathcal{H}}^{\ell}(P_{\mathrm{T}},P_{\mathrm{S}}). \label{eq:3}
\end{align}

The upper bound~\eqref{eq:3} using $\mathcal{X}$-disc
has the same form as
the upper bound~\eqref{eq:sipm_bound}
except for the term $\varsigma_{\mathcal{H}}^{\ell}(P_{\mathrm{T}},P_{\mathrm{S}})$.
Since S-disc is never larger than $\mathcal{X}$-disc~(see the inequality~\eqref{eq:1}), S-disc gives a tighter bound than $\mathcal{X}$-disc.

The following theorem shows the generalization error bound for the finite-sample case.
\begin{theorem}
    \label{thm:generalization-bound-with-s-disc}
    When we consider $\ell = \ell_{01}$, for any $h \in \mathcal{H}$ and $\delta \in (0,1)$, with probability at least $1- \delta$,
    \begin{align*}
        &R_{\mathrm{T}}^{\ell}(h,f_{\mathrm{T}})-R_{\mathrm{T}}^{\ell}(h_{\mathrm{T}}^{*},f_{\mathrm{T}})\\
        &\leq \widehat{R}_{\mathrm{S}}^{\ell}(h,h_{\mathrm{S}}^{*})+ R_{\mathrm{T}}^{\ell}(h_{\mathrm{S}}^{*},h_{\mathrm{T}}^{*})+\varsigma_{\mathcal{H}}^{\ell}(\widehat{P}_{\mathrm{T}},\widehat{P}_{\mathrm{S}})\\
        &\quad + \frac{\mathfrak{C}_{\mathcal{H}\otimes\mathcal{H}}}{\sqrt{n_{\mathrm{T}}}} + \frac{\mathfrak{C}_{\mathcal{H}\otimes\mathcal{H}}}{\sqrt{n_{\mathrm{S}}}} + \sqrt{\frac{\log\frac{5}{\delta}}{2n_{\mathrm{T}}}} + 2\sqrt{\frac{\log\frac{5}{\delta}}{2n_{\mathrm{S}}}}
    \end{align*}
under the assumption in \eqref{assump:rademacher}.
\end{theorem}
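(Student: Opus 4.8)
The plan is to start from the population-level inequality of Theorem~\ref{thm:generalization-bound} and replace its two population quantities that differ from the target statement --- the expected source risk $R_{\mathrm{S}}^{\ell}(h,h_{\mathrm{S}}^{*})$ and the true S-disc $\varsigma_{\mathcal{H}}^{\ell}(P_{\mathrm{T}},P_{\mathrm{S}})$ --- by their empirical counterparts, paying one concentration price per substitution. Since the 0-1 loss obeys the triangular inequality and is bounded by $1$, Theorem~\ref{thm:generalization-bound} applies and yields, for the fixed $h \in \mathcal{H}$,
\begin{align*}
R_{\mathrm{T}}^{\ell}(h,f_{\mathrm{T}})-R_{\mathrm{T}}^{\ell}(h_{\mathrm{T}}^{*},f_{\mathrm{T}}) \leq R_{\mathrm{S}}^{\ell}(h,h_{\mathrm{S}}^{*})+R_{\mathrm{T}}^{\ell}(h_{\mathrm{S}}^{*},h_{\mathrm{T}}^{*})+\varsigma_{\mathcal{H}}^{\ell}(P_{\mathrm{T}},P_{\mathrm{S}}).
\end{align*}
The middle term $R_{\mathrm{T}}^{\ell}(h_{\mathrm{S}}^{*},h_{\mathrm{T}}^{*})$ already appears verbatim in the claim, so only the first and third terms require conversion.

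For the source risk I would exploit the fact that $h$ and $h_{\mathrm{S}}^{*}$ are both deterministic: $h$ is an arbitrary but fixed element of $\mathcal{H}$, and $h_{\mathrm{S}}^{*}$ is the \emph{true} (population) source risk minimizer, hence independent of the sample $\mathcal{S}$. Therefore $\widehat{R}_{\mathrm{S}}^{\ell}(h,h_{\mathrm{S}}^{*})$ is an average of $n_{\mathrm{S}}$ i.i.d.\ $[0,1]$-valued random variables with mean $R_{\mathrm{S}}^{\ell}(h,h_{\mathrm{S}}^{*})$, so a single application of Hoeffding's inequality gives, with probability at least $1-\delta/5$,
\begin{align*}
R_{\mathrm{S}}^{\ell}(h,h_{\mathrm{S}}^{*}) \le \widehat{R}_{\mathrm{S}}^{\ell}(h,h_{\mathrm{S}}^{*}) + \sqrt{\frac{\log\frac{5}{\delta}}{2n_{\mathrm{S}}}}.
\end{align*}
No uniform convergence over $\mathcal{H}$ is needed for this term, which is the key observation that keeps it inexpensive.

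For the S-disc I would invoke Corollary~\ref{cor:01-sipm-rate}, which holds under the assumed Rademacher bound~\eqref{assump:rademacher}, applied with confidence parameter $4\delta/5$. Because the union bound inside Corollary~\ref{cor:01-sipm-rate} splits its budget into four events, this choice turns each $\log\frac{4}{\cdot}$ factor into $\log\frac{5}{\delta}$, giving with probability at least $1-4\delta/5$
\begin{align*}
\varsigma_{\mathcal{H}}^{\ell}(P_{\mathrm{T}},P_{\mathrm{S}}) \le \varsigma_{\mathcal{H}}^{\ell}(\widehat{P}_{\mathrm{T}},\widehat{P}_{\mathrm{S}}) + \frac{\mathfrak{C}_{\mathcal{H}\otimes\mathcal{H}}}{\sqrt{n_{\mathrm{T}}}} + \frac{\mathfrak{C}_{\mathcal{H}\otimes\mathcal{H}}}{\sqrt{n_{\mathrm{S}}}} + \sqrt{\frac{\log\frac{5}{\delta}}{2n_{\mathrm{T}}}} + \sqrt{\frac{\log\frac{5}{\delta}}{2n_{\mathrm{S}}}}.
\end{align*}

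To finish, I would take a union bound over the two events above, whose total failure probability is at most $\delta/5 + 4\delta/5 = \delta$, so both inequalities hold simultaneously with probability at least $1-\delta$. Substituting them into the population bound and merging the two identical source terms $\sqrt{\log(5/\delta)/(2n_{\mathrm{S}})}$ into a single factor of $2$ then reproduces the claimed inequality exactly. The only genuine subtlety is the confidence bookkeeping: one must recognize that five concentration events are in play --- the four hidden inside Corollary~\ref{cor:01-sipm-rate} together with the single Hoeffding bound for the source risk --- and allocate $\delta$ so that each event receives $\delta/5$, which is precisely what produces the common $\log\frac{5}{\delta}$ throughout and the coefficient $2$ on the source concentration term. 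Everything else is a direct substitution into Theorem~\ref{thm:generalization-bound}.
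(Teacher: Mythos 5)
Your proposal is correct and follows essentially the same route as the paper's proof: the paper also fixes $h$, applies a one-sided concentration bound (McDiarmid's inequality with bounded differences $1/n_{\mathrm{S}}$, which for an i.i.d.\ average of $[0,1]$-valued losses is exactly your Hoeffding step) to the source risk with budget $\delta/5$, invokes Corollary~\ref{cor:01-sipm-rate} with budget $4\delta/5$ to convert $\varsigma_{\mathcal{H}}^{\ell}(P_{\mathrm{T}},P_{\mathrm{S}})$ into $\varsigma_{\mathcal{H}}^{\ell}(\widehat{P}_{\mathrm{T}},\widehat{P}_{\mathrm{S}})$, and combines both with Theorem~\ref{thm:generalization-bound} via a union bound. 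Your confidence bookkeeping, including the origin of the $\log\frac{5}{\delta}$ factors and the coefficient $2$ on the source term, matches the paper exactly.
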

The proof of this theorem is given in Appendix.

Theorem~\ref{thm:generalization-bound-with-s-disc} tells us that when $n_{\mathrm{S}},n_{\mathrm{T}} \rightarrow \infty$ the following three terms are dominating in the bound of the regret in the target domain $R_{\mathrm{T}}^{\ell}(h,f_{\mathrm{T}})-R_{\mathrm{T}}^{\ell}(h_{\mathrm{T}}^{*},f_{\mathrm{T}})$:
(i) the empirical loss with respect to $h_{\mathrm{S}}^{*}$ in the source domain,
(ii) the difference between $h_{\mathrm{T}}^{*}$ and $h_{\mathrm{S}}^{*}$ in the target domain,
and (iii) S-disc between the two empirical distributions $\varsigma_{\mathcal{H}}^{\ell}(\widehat{P}_{\mathrm{T}},\widehat{P}_{\mathrm{S}})$. Therefore, if $h_{\mathrm{T}}^{*}$ is
sufficiently close to $h_{\mathrm{S}}^{*}$, selecting a good source in terms of S-disc allows us to achieve good target generalization.

\section{Comparison with Existing Discrepancy Measures}
In the previous section,
we showed the consistency of the estimator for S-disc
and derived a generalization error bound of S-disc tighter than $\mathcal{X}$-disc.
In this section,
we first compare these theoretical guarantees of S-disc with those for the existing ones in more detail.
We next discuss their computation cost.
This is also an important aspect
when we apply these discrepancy measures to sentiment analysis~\cite{bhatt2016multi}, adversarial learning~\cite{zhao2018multiple}, and computer vision~\cite{saito2017maximum} for source selection or reweighting of the source data.
In fact, in these applications the discrepancy $d_{\mathcal{H}}$ instead of $\mathcal{X}$-disc is used for ease of computation even though
$d_{\mathcal{H}}$ has no theoretical guarantee on the generalization error.
The results of this section are summarized in Table~\ref{tab:comparison}.

\subsection{Convergence Rates of Discrepancy Estimators}
Here we discuss the consistency and convergence rates of the estimators of discrepancy measures.

The empirical estimator of $d_\mathcal{H}$ is consistent, and its convergence rate is $\mathcal{O}_p(((\log n_{\mathrm{T}})/n_{\mathrm{T}})^{1/2} + ((\log n_{\mathrm{S}})/n_{\mathrm{S}})^{1/2})$~\cite[Lemma 1]{ben2010theory}.
This rate is slower than the rate
$\mathcal{O}_p\left({n_{\mathrm{T}}}^{-1/2}+{n_{\mathrm{S}}}^{-1/2}\right)$
for S-disc with appropriately controlled Rademacher complexities $\mathfrak{R}_{P_{\mathrm{T}},n_{\mathrm{T}}}(\mathcal{H})$ and $\mathfrak{R}_{P_{\mathrm{S}},n_{\mathrm{S}}}(\mathcal{H})$ of the hypothesis class, such as the linear-in-parameter model~\cite[Theorem 4.3]{Mohri:2012}.
Here recall that no generalization error bound is known
in terms of $d_\mathcal{H}$ even though $d_\mathcal{H}$
itself is consistently estimated.

On the other hand, the empirical estimator of $\mathcal{X}$-disc is shown to be consistent ~\cite[Corollary 7]{mansour2009domain} and its convergence rate is $\mathcal{O}_p({n_{\mathrm{T}}}^{-1/2} + {n_{\mathrm{S}}}^{-1/2})$
in the case that the loss function is $\ell_q$ loss, i.e., $\ell_q(y,y')=|y-y'|^q$.
Thus, the derived rate is the same as S-disc
whereas the requirement on the loss function is more restrictive
than the one for S-disc in Theorem~\ref{thm:generalization-bound-with-s-disc}.

Note that the above difference of the theoretical guarantees
does not come from the inherent difference of these estimators.
This is because we adopted the analysis based on the Rademacher complexity,
which has not been well studied in the context of unsupervised domain adaptation.
This is a distribution-dependent complexity measure and less pessimistic compared with the VC-dimension used in the previous work~\cite{ben2010theory}.
In fact, the known guarantees on $d_\mathcal{H}$ and $\mathcal{X}$-disc
can be improved to Propositions~\ref{prop:dh-deviation} and \ref{prop:disc-deviation} given below.

\begin{proposition}
    \label{prop:dh-deviation}
    For any $\delta \in (0, 1)$, with probability at least $1 - \delta$,
    \begin{align*}
        &\left|d_\mathcal{H}(\widehat{P}_{\mathrm{T}},\widehat{P}_{\mathrm{S}}) - d_\mathcal{H}(P_{\mathrm{T}},P_{\mathrm{S}})\right| \\
        &\; \le 2\mathfrak{R}_{P_{\mathrm{T}},n_{\mathrm{T}}}(\mathcal{H}) + 2\mathfrak{R}_{P_{\mathrm{S}},n_{\mathrm{S}}}(\mathcal{H}) + \sqrt{\frac{2\log\frac{4}{\delta}}{n_{\mathrm{T}}}} + \sqrt{\frac{2\log\frac{4}{\delta}}{n_{\mathrm{S}}}}.
    \end{align*}
\end{proposition}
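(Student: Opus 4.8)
The plan is to follow the same Rademacher-complexity strategy used for Theorem~\ref{thm:sipm-deviation}, reducing the two-distribution deviation to a sum of two single-distribution uniform deviations. First I would exploit that $d_\mathcal{H}$ is a supremum over $h$ of an absolute difference: using $|\supremum_h A_h - \supremum_h B_h| \le \supremum_h|A_h - B_h|$ together with the reverse triangle inequality $\bigl||a|-|b|\bigr| \le |a-b|$, the gap is controlled by a supremum of a single absolute difference, and one further triangle inequality splits the two domains apart,
\begin{align*}
&\left|d_\mathcal{H}(\widehat{P}_{\mathrm{T}},\widehat{P}_{\mathrm{S}}) - d_\mathcal{H}(P_{\mathrm{T}},P_{\mathrm{S}})\right| \\
&\quad \le \supremum_{h\in\mathcal{H}}\left|\bigl(R_{\mathrm{T}}^{\ell_{01}}(h,1)-\widehat{R}_{\mathrm{T}}^{\ell_{01}}(h,1)\bigr) - \bigl(R_{\mathrm{S}}^{\ell_{01}}(h,1)-\widehat{R}_{\mathrm{S}}^{\ell_{01}}(h,1)\bigr)\right| \\
&\quad \le \Delta_{\mathrm{T}} + \Delta_{\mathrm{S}},
\end{align*}
where $\Delta_{\mathrm{D}} := \supremum_{h\in\mathcal{H}}\bigl|R_{\mathrm{D}}^{\ell_{01}}(h,1)-\widehat{R}_{\mathrm{D}}^{\ell_{01}}(h,1)\bigr|$ depends only on domain $\mathrm{D}$. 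This decoupling is what lets $n_{\mathrm{T}}$ and $n_{\mathrm{S}}$ enter the final bound through separate terms.

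The second step is a textbook uniform-convergence bound for each $\Delta_{\mathrm{D}}$. I would apply McDiarmid's bounded-differences inequality to $\Delta_{\mathrm{D}}$ (each summand has bounded range, so perturbing one of the $n_{\mathrm{D}}$ points changes $\Delta_{\mathrm{D}}$ by $O(1/n_{\mathrm{D}})$), followed by the standard symmetrization lemma, which bounds the expected uniform deviation by twice the Rademacher complexity of the associated loss class. Since $\Delta_{\mathrm{D}}$ is two-sided and there are two domains, I would allocate the confidence across the two tails of each of the two deviations — four events in total — so that a union bound at level $\delta/4$ per event yields the $\log\frac{4}{\delta}$ in the statement.

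The remaining step is to replace the Rademacher complexity of the $\ell_{01}$-loss class by $\mathfrak{R}_{P_{\mathrm{D}},n_{\mathrm{D}}}(\mathcal{H})$ itself. Because $R^{\ell_{01}}(h,1)$ depends on $h$ only through $\mathrm{sign}(h(x))$, the zero-one loss is not Lipschitz and Talagrand's contraction cannot be invoked; instead I would use the combinatorial identity relating the Rademacher complexity of the binary-loss class to that of the hypothesis class~\cite[Lemma 3.1]{Mohri:2012} (the same identity used in Corollary~\ref{cor:01-sipm-rate}), the constant shift and negation in $\ell_{01}(h,1)=\tfrac{1-\mathrm{sign}(h(x))}{2}$ being absorbed by the symmetry of the Rademacher variables. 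I expect the main obstacle to be bookkeeping rather than conceptual: ensuring the reduction in the first paragraph cleanly decouples the two domains into single-sample deviations, and then tracking the numerical constants — the range of the sign-valued summands feeding McDiarmid and the four-way split of $\delta$ — so that the concentration terms emerge as $\sqrt{2\log(4/\delta)/n_{\mathrm{D}}}$ and the complexity terms as $2\mathfrak{R}_{P_{\mathrm{D}},n_{\mathrm{D}}}(\mathcal{H})$.
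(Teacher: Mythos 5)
Your proposal is correct and takes essentially the same route as the paper, which omits this proof as being ``quite similar to that of Theorem~\ref{thm:sipm-deviation}'': your sup-splitting decomposition into two per-domain uniform deviations, the McDiarmid-plus-symmetrization bound (Lemma~\ref{lem:uniform-deviation-bound}), and the Lemma~3.1 identity of \citet{Mohri:2012} used in Corollary~\ref{cor:01-sipm-rate} are exactly the intended ingredients. One minor remark: carried out carefully, your argument actually yields constants \emph{tighter} than those you promise in the last paragraph---the halving identity gives $2\mathfrak{R}_{P_{\mathrm{D}},n_{\mathrm{D}}}\bigl(\{x \mapsto \ell_{01}(h(x),1) \mid h \in \mathcal{H}\}\bigr) = \mathfrak{R}_{P_{\mathrm{D}},n_{\mathrm{D}}}(\mathcal{H})$ and McDiarmid with $[0,1]$-valued summands gives $\sqrt{\log(4/\delta)/(2n_{\mathrm{D}})}$ rather than $\sqrt{2\log(4/\delta)/n_{\mathrm{D}}}$---which is harmless, since the stronger bound implies the stated one.
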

\begin{proposition}
    \label{prop:disc-deviation}
    Assume the loss function $\ell$ is upper bounded by $M > 0$.
    For any $\delta \in (0, 1)$, with probability at least $1 - \delta$,
    \begin{align*}
            &\left|\mathrm{disc}_\mathcal{H}^\ell(\widehat{P}_{\mathrm{T}},\widehat{P}_{\mathrm{S}}) - \mathrm{disc}_\mathcal{H}^\ell(P_{\mathrm{T}},P_{\mathrm{S}})\right| \\
        &\qquad \le 2\mathfrak{R}_{P_{\mathrm{T}},n_{\mathrm{T}}}(\ell\circ(\mathcal{H}\otimes\mathcal{H})) + 2\mathfrak{R}_{P_{\mathrm{S}},n_{\mathrm{S}}}(\ell\circ(\mathcal{H}\otimes\mathcal{H})) \\
        &\qquad \quad + M\sqrt{\frac{\log\frac{4}{\delta}}{2n_{\mathrm{T}}}} + M\sqrt{\frac{\log\frac{4}{\delta}}{2n_{\mathrm{S}}}}.
    \end{align*}
\end{proposition}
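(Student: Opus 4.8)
The plan is to mimic the proof of Theorem~\ref{thm:sipm-deviation} almost verbatim, exploiting the fact that $\mathrm{disc}_\mathcal{H}^\ell$ and S-disc differ only in whether the second hypothesis ranges over $\mathcal{H}$ or is fixed to $h_{\mathrm{S}}^*$. First I would reduce the difference of the two outer suprema to a single uniform deviation. Writing, for each pair $(h,h')$,
\begin{align*}
\Delta(h,h') &= R_{\mathrm{T}}^\ell(h,h') - R_{\mathrm{S}}^\ell(h,h'), \\
\widehat{\Delta}(h,h') &= \widehat{R}_{\mathrm{T}}^\ell(h,h') - \widehat{R}_{\mathrm{S}}^\ell(h,h'),
\end{align*}
we have $\mathrm{disc}_\mathcal{H}^\ell(P_{\mathrm{T}},P_{\mathrm{S}}) = \sup_{h,h'}|\Delta(h,h')|$ and $\mathrm{disc}_\mathcal{H}^\ell(\widehat{P}_{\mathrm{T}},\widehat{P}_{\mathrm{S}}) = \sup_{h,h'}|\widehat{\Delta}(h,h')|$. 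Applying the elementary inequality $\bigl|\sup_z|a(z)| - \sup_z|b(z)|\bigr| \le \sup_z|a(z)-b(z)|$, which follows from the reverse triangle inequality, bounds the left-hand side by $\sup_{h,h'}|\widehat{\Delta}(h,h') - \Delta(h,h')|$.

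Second, I would separate the target and source fluctuations. Since $\widehat{\Delta}(h,h') - \Delta(h,h') = \bigl(\widehat{R}_{\mathrm{T}}^\ell(h,h') - R_{\mathrm{T}}^\ell(h,h')\bigr) - \bigl(\widehat{R}_{\mathrm{S}}^\ell(h,h') - R_{\mathrm{S}}^\ell(h,h')\bigr)$, the triangle inequality gives
\begin{align*}
\sup_{h,h'}|\widehat{\Delta}(h,h') - \Delta(h,h')|
&\le \sup_{h,h'}\bigl|\widehat{R}_{\mathrm{T}}^\ell(h,h') - R_{\mathrm{T}}^\ell(h,h')\bigr| \\
&\quad + \sup_{h,h'}\bigl|\widehat{R}_{\mathrm{S}}^\ell(h,h') - R_{\mathrm{S}}^\ell(h,h')\bigr|.
\end{align*}
Each term is a uniform deviation of an empirical mean from its population mean over the function class $\ell\circ(\mathcal{H}\otimes\mathcal{H})$, whose members take values in $[0,M]$.

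Third, I would invoke the standard Rademacher generalization bound on each term: by McDiarmid's bounded-differences inequality followed by the symmetrization argument, the quantity $\sup_g(\widehat{\mathbb{E}}g - \mathbb{E}g)$ over $g\in\ell\circ(\mathcal{H}\otimes\mathcal{H})$ is, with probability at least $1-\delta/4$, at most $2\mathfrak{R}_{P_{\mathrm{T}},n_{\mathrm{T}}}(\ell\circ(\mathcal{H}\otimes\mathcal{H})) + M\sqrt{\log(4/\delta)/(2n_{\mathrm{T}})}$, and similarly for the reversed deviation and for the source sample. Taking a union bound over these four events (two domains times two signs, each allocated failure probability $\delta/4$) yields the claimed $\log(4/\delta)$ inside the square-root terms with total failure probability $\delta$.

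I do not expect a genuine obstacle here: the argument is essentially identical to that of Theorem~\ref{thm:sipm-deviation}. The only point requiring care is the identification of the correct capacity term --- because both $h$ and $h'$ range over $\mathcal{H}$, each summand $\ell(h(x),h'(x))$ belongs to $\ell\circ(\mathcal{H}\otimes\mathcal{H})$, so this is precisely the class whose Rademacher complexity appears, whereas the S-disc proof only required the subclass with $h'$ fixed. The remaining steps are the routine symmetrization and concentration inequalities.
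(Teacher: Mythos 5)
Your proof is correct and follows essentially the same route as the paper, which omits the proof of this proposition precisely because it is the argument of Theorem~\ref{thm:sipm-deviation} specialized to $\mathrm{disc}_\mathcal{H}^\ell$: reduce the difference of suprema to per-domain uniform deviations over $\ell\circ(\mathcal{H}\otimes\mathcal{H})$, then apply the Rademacher/McDiarmid bound with a union bound whose accounting (four one-sided events at $\delta/4$, versus the paper's two two-sided events at $\delta/2$) yields the identical $\log(4/\delta)$ terms. You also correctly note the one simplification: since both hypotheses already range over $\mathcal{H}$, the sup-enlargement step needed in the S-disc proof is unnecessary here.
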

The proofs of these propositions are quite similar to that of Theorem~\ref{thm:sipm-deviation} and omitted.

In summary, under the mild assumptions, the convergence rates of the empirical estimators of $d_\mathcal{H}$ and $\mathcal{X}$-disc are $\mathcal{O}_p\left({n_{\mathrm{T}}}^{-1/2}+{n_{\mathrm{S}}}^{-1/2}\right)$ as well as S-disc.

\subsection{Computational Complexity}

Computation of $d_\mathcal{H}$ can be done by the empirical risk minimization~\cite[Lemma 2]{ben2010theory}.
The original form is given with the 0-1 loss, which can be efficiently minimized with a surrogate loss.
When the hinge loss is applied, the minimization can be
carried out with the computation cost $O((n_{\mathrm{T}} + n_{\mathrm{S}})^3)$ by the SMO algorithm~\cite{Platt98sequentialminimal}.

On the other hand, no efficient algorithm is given for the computation of $\mathcal{X}$-disc in the classification setting.\footnote{A computation algorithm of $\mathcal{X}$-disc for the 0-1 loss is given only in the one-dimensional case~\cite[Section 5.2]{mansour2009domain}.}
For a fair comparison, we give a relatively efficient algorithm to compute $\mathcal{X}$-disc~\eqref{def:x-disc} in the classification setting with the hinge loss, based on \emph{semidefinite relaxation}.
Unfortunately, the computational complexity of the relaxed algorithm is
still $O((n_{\mathrm{T}}+n_{\mathrm{S}}+d)^8)$, which is prohibitive compared with the computation of S-disc and $d_\mathcal{H}$.

\begin{figure}[t]
\includegraphics[width=\columnwidth]{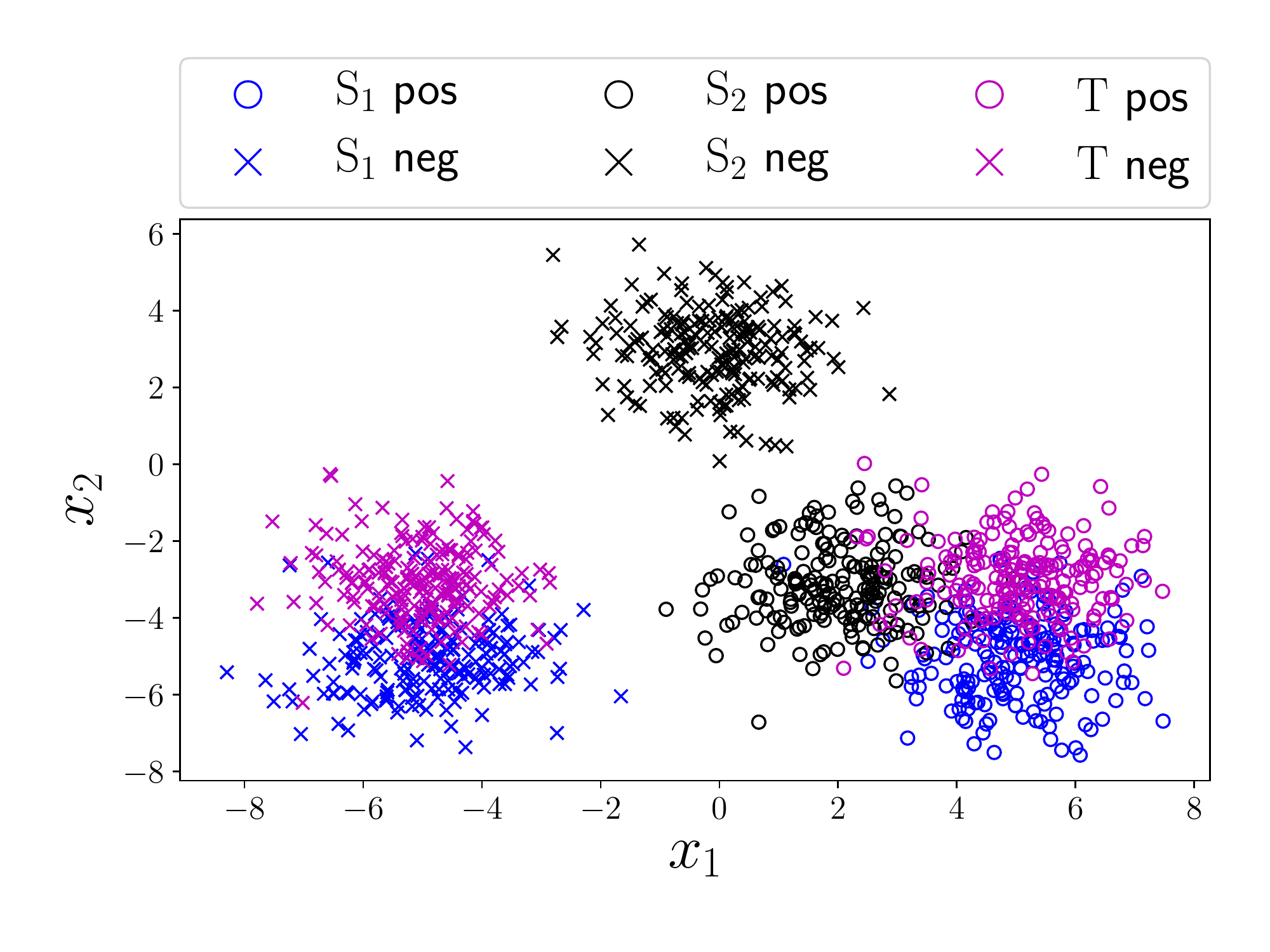}
\caption{2D plots of three domains.}
\label{fig:illustration}
\end{figure}

\section{Experiments}
In this section, we provide the experimental results that illustrate the failure of existing discrepancy measures and the advantage of using S-disc. We illustrate the advantage of S-disc in terms of computation time, the empirical convergence, and the performance on the source selection task.

\subsection{Illustration}
We illustrate the failure of $d_{\mathcal{H}}$, which is the well-known proxy for $\mathcal{X}$-disc. We compared S-disc with $d_{\mathcal{H}}$ in the toy experiment.

We generated $200$ data points per class for each of two sources $\mathrm{S}_1$ and $\mathrm{S}_2$ and target domain as follows:
\begin{align*}
P_{\mathrm{S}_{i}}(X|Y=j)&= \mathcal{N}(\mu_i^{(j)},I) \ (i = 1,2),\\
P_{\mathrm{T}}(X|Y=j)    &= \mathcal{N}(\mu_{\mathrm{T}}^{(j)},I),
\end{align*}
where
\begin{align*}
&\mu_1^{(0)} = (-5, -5), &\mu_2^{(0)} = (0, 3),&\quad \mu_{\mathrm{T}}^{(0)} = (-5, -3),\\
&\mu_1^{(1)} = (5, -5), &\mu_2^{(1)} = (2, -3),&\quad \mu_{\mathrm{T}}^{(1)} = (5, -3).
\end{align*}

In this experiment, we used the support vector machine with a linear kernel\footnote{We used scikit-learn~\cite{pedregosa2011scikit} to implement it.}.
For these data, we obtained the following results:
\begin{align*}
\varsigma_{\mathcal{H}}^{\ell}(\widehat{P}_{\mathrm{T}},\widehat{P}_{\mathrm{S}_1})= 0.27,
&\quad\varsigma_{\mathcal{H}}^{\ell}(\widehat{P}_{\mathrm{T}},\widehat{P}_{\mathrm{S}_2})= 0.49,\\
d_{\mathcal{H}}(\widehat{P}_{\mathrm{T}},\widehat{P}_{\mathrm{S}_1})= 0.69,
&\quad d_{\mathcal{H}}(\widehat{P}_{\mathrm{T}},\widehat{P}_{\mathrm{S}_2})= 0.49.
\end{align*}

These values indicate that while $d_{\mathcal{H}}$ regards $\mathrm{S}_2$ as the better source, S-disc regards $\mathrm{S}_1$ as the better source for target domain.

In this example, the loss calculated on the target domain of the classifier trained on $\mathrm{S}_1$ is 0.0 and the loss of the classifier trained on $\mathrm{S}_2$ is 0.49.
This implies that S-disc is the better discrepancy measure to measure the quality of source domains for a better generalization in a given target domain.

Intuitively, $d_\mathcal{H}$ is a heuristic measure based on a classifier separating the source and target domains without considering labels in the source domain.  Once the supports of the input distributions are not highly overlapped between the source and target domains
, $d_\mathcal{H}$ may immediately regard that domains are totally different from each other even if the risk minimizers are highly similar between these domains, which resulted in the failure of $d_\mathcal{H}$. On the other hand, S-disc can prevent such a problem by taking labels from the source domain into account as illustrated in the toy experiments and justified in Theorems~\ref{thm:generalization-bound} and \ref{thm:generalization-bound-with-s-disc}.

\subsection{Comparison of Computation Time}
We compared the computation time to estimate S-disc, $d_{\mathcal{H}}$, and $\mathcal{X}$-disc.
We used 2-dimensional 200 examples for both source and target domains. Each domain consists of 100 positive examples and 100 negative examples. 
For the computation of $\mathcal{X}$-disc, we used the relaxed algorithm\footnote{We used picos (\url{https://picos-api.gitlab.io/picos/}) for the SDP solver with cvxopt (\url{https://cvxopt.org/}) backend.} shown in Appendix. For both $d_{\mathcal{H}}$ and S-disc, we used the support vector machine with a linear kernel for the computation of S-disc and $d_\mathcal{H}$. 
The simulation was run on 2.8GHz Intel\textsuperscript{\textregistered} Core i7.
The results shown in Figure~\ref{fig:time} demonstrate that the computation time of $\mathcal{X}$-disc is prohibitive while both S-disc and $d_{\mathcal{H}}$ are feasible to compute as suggested in Table~\ref{tab:comparison}.

\begin{figure}[t]
    \includegraphics[width=\columnwidth]{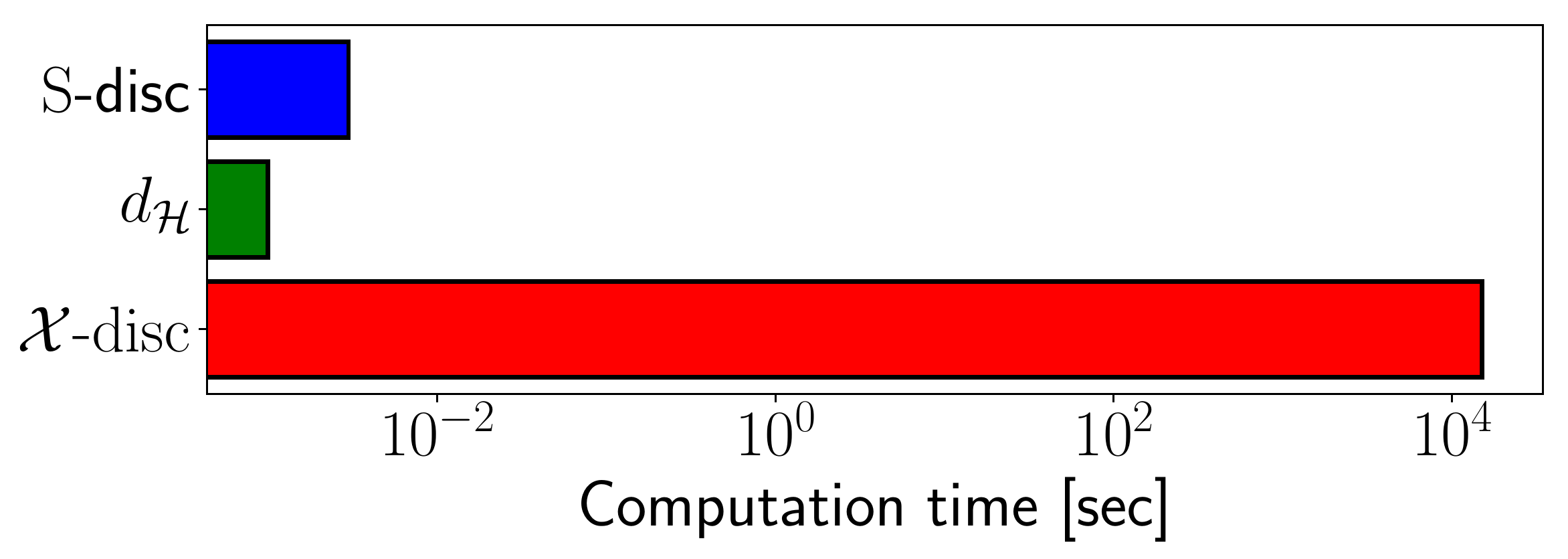}
    \caption{Comparison of computation time in the log scale.}
    \label{fig:time}
\end{figure}

\subsection{Empirical Convergence}
We compared the empirical convergence of S-disc and $d_\mathcal{H}$ based on logistic regression implemented with scikit-learn with default parameters~\cite{pedregosa2011scikit}. Note that $\mathcal{X}$-disc is not used due to computational intractability.
Here, we used MNIST~\cite{lecun2010mnist} dataset, and considered a binary classification task to separate odd and even digits.
We defined the source and target domains as follows:
\begin{itemize}
\item Source domain $\mathrm{S}_1$: MNIST,
\item Source domain $\mathrm{S}_2$: MNIST from zero to seven,
\item Target domain: MNIST.
\end{itemize}
The number of examples was ranged from \{1000, 2000, \dots, 20000\}  for each domain. Note that while examples from the source domain $\mathrm{S}_1$ are drawn from the same distribution of the target domain, the sample of the source domain $\mathrm{S}_2$ is affected by sample selection bias~\cite{cortes2008sample}.  Figure~\ref{fig:exp1} shows the empirical convergence of the estimator of both discrepancy measures.
It is observed that both discrepancy measures indicate that $\mathrm{S}_1$ is a better source than $\mathrm{S}_2$ for target domain.
However, since the true value of the discrepancy of $\mathrm{S}_1$ from the target domain is supposed to be zero, we can observe that the $d_\mathcal{H}$ will be converged much more slowly than S-disc or converged to a non-zero value.

\begin{figure}[t]
\centering
\includegraphics[width=\columnwidth]{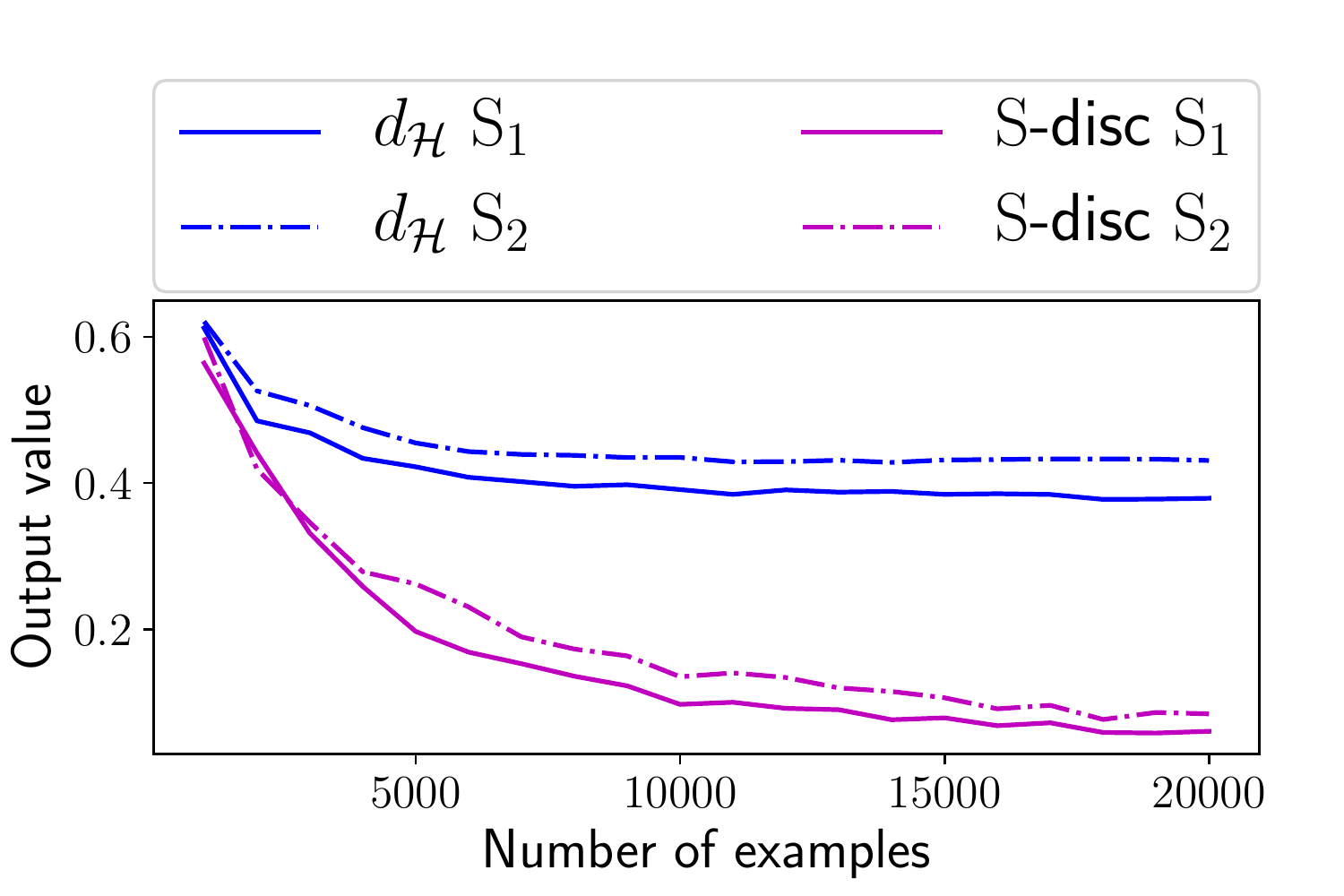}
\caption{Empirical convergence of $d_\mathcal{H}$ and S-disc.}
\label{fig:exp1}
\end{figure}

\subsection{Source Selection}
We compared the performance in the source selection task between S-disc and $d_{\mathcal{H}}$.
We used the source domains and the target domain as follows:
\begin{itemize}
\item Clean source domains: Five grayscale MNIST-M~\cite{ganin2016domain},
\item Noisy source domains: Five grayscale MNIST-M corrupted by Gaussian random noise,
\item Target domain: MNIST.
\end{itemize}
The MNIST-M dataset is known to be useful for the domain adaptation task when the target domain is MNIST.
The task of each domain is to classify between even and odd digits and logistic regression with default parameters was used for computing S-disc and $d_\mathcal{H}$.
The objective of this source selection task is to correctly rank five clean source domains over noisy domains.
We ranked each source by computing discrepancy values between the target domain and each source and rank them in ascending order.
The score was calculated by counting how many clean sources are ranked in the first five sources.
We varied the number of examples from \{200, 400, \dots, 4000\} for each domain.
The Gaussian noise with standard deviation $\epsilon = {30, 40, 50}$ were added and clipped to force the value to between $0-255$. For each number of examples per class, the experiments were conducted $15$ times and the average score was used.

Figure~\ref{exp2} shows the performance of each discrepancy measure with different noise rates.
As the number of examples increased, S-disc achieved a better performance.
In contrast, $d_\mathcal{H}$ cannot distinguish between noisy and clean source domains.
In fact, $d_\mathcal{H}$ always returned one, which indicates that MNIST-M is unrelated to MNIST. Unlike the previous experiment on empirical convergence, the difference between the two domains may be harder to identify since the source domains and the target domain are not exactly the same (MNIST vs MNIST-M). As a result, our experiment demonstrates the failure of $d_\mathcal{H}$ in the source selection task and suggests the advantage of S-disc in this task.

\begin{figure}[t]
    \includegraphics[width=\columnwidth]{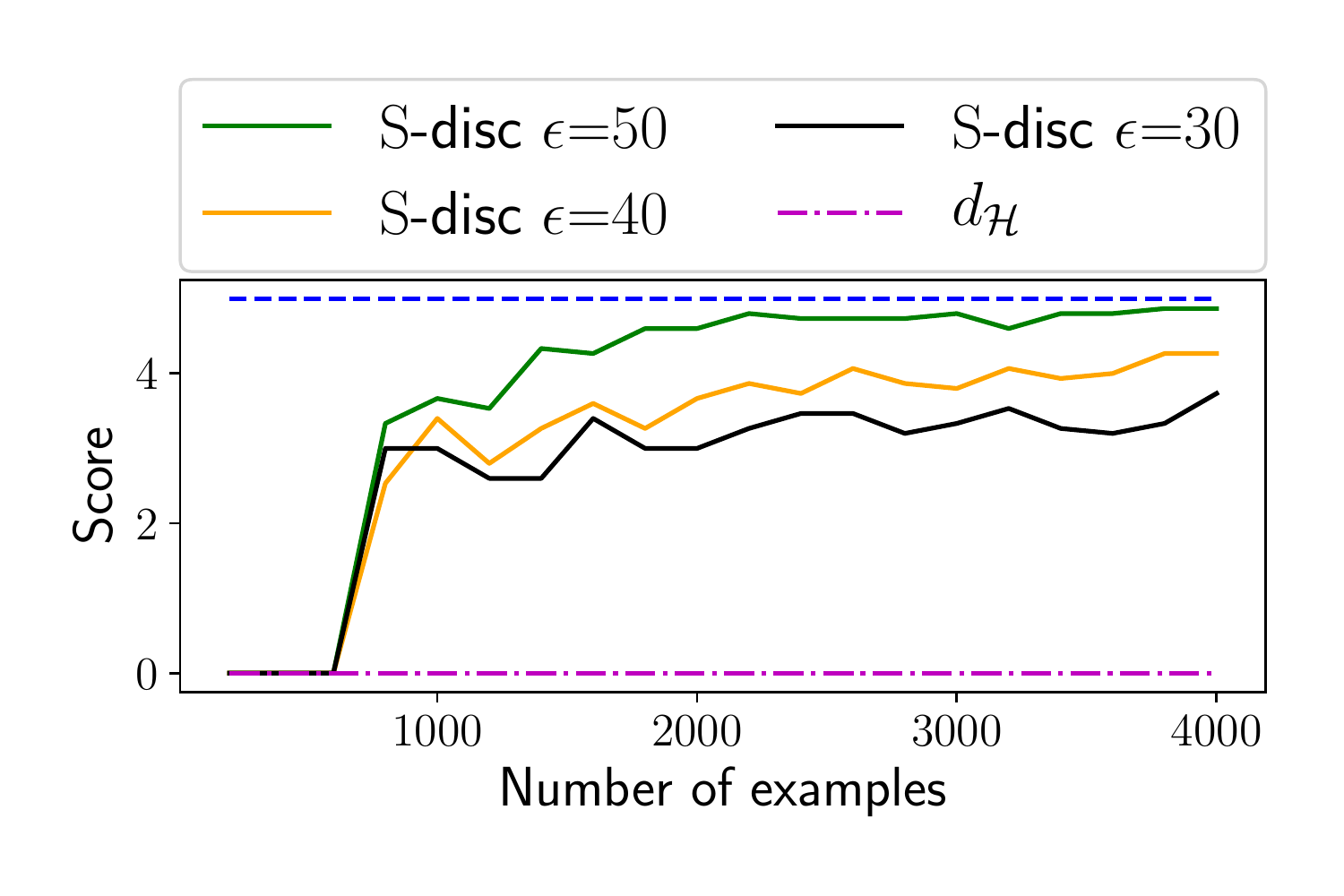}
    \caption{Source selection performance with varying noise rates.  Blue dotted line denotes the maximum score (five).\\\mbox{}}
    \label{exp2}
\end{figure}

\section{Conclusion}
We proposed a novel discrepancy measure for unsupervised domain adaptation called source-guided discrepancy (S-disc). We provided a computationally efficient algorithm for the estimation of S-disc with respect to the 0-1 loss, and also established the consistency of the estimator with the convergence rate. Moreover, we derived a generalization bound based on S-disc, which is tighter than that of $\mathcal{X}$-disc. Finally, we demonstrated the failure of existing discrepancy measures and the advantages of S-disc through experiments. 

\section{Acknowledgements}
We thank Ikko Yamane, Futoshi Futami, and Kento Nozawa for the useful discussion.
NC was supported by MEXT scholarship. JH was supported by KAKENHI 17H00757, IS was supported by JST CREST Grant Number JPMJCR17A1, and MS was supported by KAKENHI 17H01760.
\nocite{kuroki2019aaai}
\bibliography{SIPM}
\bibliographystyle{aaai}

\appendix
\allowdisplaybreaks
\section{Proof of Theorem~\ref{thm:sipm-estimation}}
\begin{proof}
First, the following equality holds:
\begin{align*} 
&\widehat{R}_{\mathrm{T}}^{\ell_{01}}(h,h_{\mathrm{S}}^*) - \widehat{R}_{\mathrm{S}}^{\ell_{01}}(h,h_{\mathrm{S}}^*)\\
&= (1- \widehat{R}_{\mathrm{T}}^{\ell_{01}}(h, -h_{\mathrm{S}}^*)) - \widehat{R}_{\mathrm{S}}^{\ell_{01}}(h,h_{\mathrm{S}}^*) \\
&=1 - \left(\widehat{R}_{\mathrm{S}}^{\ell_{01}}(h,h_{\mathrm{S}}^*) + \widehat{R}_{\mathrm{T}}^{\ell_{01}}(h,-h_{\mathrm{S}}^*)\right).
\end{align*}
We also have,
\begin{align*}
& \widehat{R}_{\mathrm{T}}^{\ell_{01}}(h,h_{\mathrm{S}}^*) - \widehat{R}_{\mathrm{S}}^{\ell_{01}}(h,h_{\mathrm{S}}^*)\\
&=  1- \widehat{R}_{\mathrm{T}}^{\ell_{01}}(-h,h_{\mathrm{S}}^*) - \left(1- \widehat{R}_{\mathrm{S}}^{\ell_{01}}(-h,h_{\mathrm{S}}^*)\right)\\
&= -\left( \widehat{R}_{\mathrm{T}}^{\ell_{01}}(-h,h_{\mathrm{S}}^*) - \widehat{R}_{\mathrm{S}}^{\ell_{01}}(-h,h_{\mathrm{S}}^*)\right).
\end{align*}
Then, for a symmetric hypothesis class $\mathcal{H}$, the following equality holds:
\begin{align*}
\varsigma_{\mathcal{H}}^{\ell}(\widehat{P}_{\mathrm{T}},\widehat{P}_{\mathrm{S}}) 
&= \supremum_{h \in \mathcal{H}} \left|\widehat{R}_{\mathrm{S}}^{\ell_{01}}(h,h_{\mathrm{S}}^*) - \widehat{R}_{\mathrm{T}}^{\ell_{01}}(h,h_{\mathrm{S}}^*)\right|\\
&= \supremum_{h \in \mathcal{H}} \left( \widehat{R}_{\mathrm{T}}^{\ell_{01}}(h,h_{\mathrm{S}}^*) - \widehat{R}_{\mathrm{S}}^{\ell_{01}}(h,h_{\mathrm{S}}^*)\right)\\
&= \supremum_{h \in \mathcal{H}}\left(1 - \left(\widehat{R}_{\mathrm{S}}^{\ell_{01}}(h,h_{\mathrm{S}}^*) + \widehat{R}_{\mathrm{T}}^{\ell_{01}}(h,-h_{\mathrm{S}}^*)\right)\right)\\
&= 1 - \min_{h \in \mathcal{H}}\left(\widehat{R}_{\mathrm{S}}^{\ell_{01}}(h,h_{\mathrm{S}}^*) + \widehat{R}_{\mathrm{T}}^{\ell_{01}}(h,-h_{\mathrm{S}}^*)\right).
\end{align*}
Next, by the definition of $J_{\ell_{01}}$, we have
\begin{align*}
J_{\ell_{01}}(h)=\widehat{R}_{\mathrm{S}}^{\ell_{01}}(h,h_{\mathrm{S}}^*) + \widehat{R}_{\mathrm{T}}^{\ell_{01}}(h,-h_{\mathrm{S}}^*).
\end{align*}
Thus, 
\begin{align*}
\varsigma_{\mathcal{H}}^{\ell_{01}}(\widehat{P}_{\mathrm{T}},\widehat{P}_{\mathrm{S}})= 1 - \min_{h \in \mathcal{H}}J_{\ell_{01}}(h).
\end{align*}
\end{proof}

\section{Proof of Theorem~\ref{thm:sipm-deviation}}
For simplicity, we use the following notation:
\begin{itemize}
    \item $P_{\mathrm{T}}f = \int f\mathrm{d}P_{\mathrm{T}}$,
    \item $P_{\mathrm{S}}f = \int f\mathrm{d}P_{\mathrm{S}}$,
    \item $\widehat{P}_{\mathrm{T}}f = {n_{\mathrm{T}}}^{-1}\sum_{i=1}^{n_{\mathrm{T}}}f(x_i^{\mathrm{T}})$,
    \item $\widehat{P}_{\mathrm{S}}f = {n_{\mathrm{S}}}^{-1}\sum_{j=1}^{n_{\mathrm{S}}}f(x_j^{\mathrm{S}})$,
\end{itemize}
where $f: \mathcal{X} \rightarrow \mathbb{R}$ is a real-valued function.
For example, $P\ell(h,h')$ means $\int \ell(h(x),h'(x))\mathrm{d}P$.

In order to derive the convergence rate of this theorem, we apply the \emph{uniform deviation bound} explained in the next lemma.
\begin{lemma}
    \label{lem:uniform-deviation-bound}
    Let $B > 0$ be an arbitrary constant, $\mathcal{F} := \{f: \mathcal{X} \rightarrow [0, B]\}$ be a class of bounded functions, and $\{x_i\}_{i=1}^k \subset \mathcal{X}^k$ be an i.i.d. sample drawn from $p$.
    Then, for any $\delta \in (0, 1)$, with probability at least $1-\delta$,
    $$
        \sup_{f\in\mathcal{F}}\left|\frac{1}{k}\sum_{i=1}^kf(x_i) - \int f(x)\mathrm{d}p\right| \le 2\mathfrak{R}_{p,k}(\mathcal{F}) + B\sqrt{\frac{\log\frac{2}{\delta}}{2k}}
        .
    $$
\end{lemma}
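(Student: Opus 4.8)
The plan is to combine McDiarmid's bounded-differences inequality with a symmetrization argument, treating the two-sided deviation by splitting the absolute value into two one-sided suprema. First I would introduce
\[
\Phi^+ := \supremum_{f \in \mathcal{F}}\left(\frac{1}{k}\sum_{i=1}^k f(x_i) - \int f\,\mathrm{d}p\right), \qquad \Phi^- := \supremum_{f \in \mathcal{F}}\left(\int f\,\mathrm{d}p - \frac{1}{k}\sum_{i=1}^k f(x_i)\right),
\]
so that the quantity to be controlled is exactly $\max(\Phi^+,\Phi^-)$. The strategy is to bound each of $\Phi^+$ and $\Phi^-$ with confidence $1-\delta/2$ and then take a union bound.

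Next I would establish the bounded-differences property. Since every $f \in \mathcal{F}$ takes values in $[0,B]$, replacing a single sample point $x_i$ by an arbitrary $x_i'$ changes $\tfrac{1}{k}\sum_j f(x_j)$ by at most $B/k$, uniformly in $f$; because $\int f\,\mathrm{d}p$ does not depend on the sample, each of $\Phi^+$ and $\Phi^-$, viewed as a function of $(x_1,\dots,x_k)$, also changes by at most $B/k$. Applying McDiarmid's inequality to each with bounded-difference constants $c_i = B/k$ (so that $\sum_i c_i^2 = B^2/k$) then yields, with probability at least $1-\delta/2$,
\[
\Phi^\pm \le \mathbb{E}[\Phi^\pm] + B\sqrt{\frac{\log\frac{2}{\delta}}{2k}},
\]
which produces precisely the deviation term in the statement.

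The key step is to bound each expected supremum by the Rademacher complexity via the standard ghost-sample symmetrization. Introducing an independent copy $\{x_i'\}_{i=1}^k$ drawn from $p$ and writing $\int f\,\mathrm{d}p = \mathbb{E}_{X'}[\tfrac{1}{k}\sum_i f(x_i')]$, Jensen's inequality moves the supremum outside the ghost expectation; then, inserting Rademacher signs $\sigma_i$—legitimate because $x_i$ and $x_i'$ are exchangeable—gives
\[
\mathbb{E}[\Phi^\pm] \le \mathbb{E}_{X,X',\sigma}\left[\supremum_{f\in\mathcal{F}}\frac{1}{k}\sum_{i=1}^k \sigma_i\big(f(x_i') - f(x_i)\big)\right] \le 2\,\mathfrak{R}_{p,k}(\mathcal{F}),
\]
where the last inequality splits the difference and uses that $-\sigma_i$ is distributed as $\sigma_i$.

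Finally I would take a union bound over the two events, so that both one-sided inequalities hold simultaneously with probability at least $1-\delta$; since the left-hand side of the claim equals $\max(\Phi^+,\Phi^-)$, combining the two bounds gives the result. I expect the symmetrization step to be the main obstacle to state cleanly: the delicate points are justifying the insertion of the Rademacher variables through exchangeability of the true and ghost samples and correctly tracking the factor of $2$. By contrast, the bounded-differences computation and the union bound are routine, and it is precisely the union over $\Phi^+$ and $\Phi^-$ that upgrades the confidence term from $\log\tfrac{1}{\delta}$ to $\log\tfrac{2}{\delta}$.
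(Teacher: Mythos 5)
Your proof is correct and is essentially the paper's own argument written out in full: the paper establishes this lemma by citing Theorem~3.1 of Mohri et al.\ (2012) — whose proof is precisely your McDiarmid-plus-ghost-sample symmetrization — and extends it to the two-sided statement by ``applying McDiarmid's inequality twice,'' which is exactly your union bound over $\Phi^+$ and $\Phi^-$ at confidence $\delta/2$ each. Your closing observation that the two-sided union bound is what turns $\log\tfrac{1}{\delta}$ into $\log\tfrac{2}{\delta}$ matches the paper's remark.
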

Lemma~\ref{lem:uniform-deviation-bound} immediately follows from Theorem~3.1 in \citet{Mohri:2012}.
To be exact, their result only leads to the one-sided version of Lemma~\ref{lem:uniform-deviation-bound}, but we can easily extend it by applying McDiarmid's inequality~\cite{McDiarmid:1989} twice.
We can obtain the convergence rate by controlling the order of the Rademacher complexity $\mathfrak{R}_{p,k}(\mathcal{F})$ appropriately, e.g., the assumption in~\eqref{assump:rademacher}.
From now on, we prove Theorem~\ref{thm:sipm-deviation}.

\begin{proof}[Proof of Theorem~\ref{thm:sipm-deviation}]
First, we show that the deviation $|\varsigma_{\mathcal{H}}^{\ell}(\widehat{P}_{\mathrm{T}},\widehat{P}_{\mathrm{S}}) - \varsigma_{\mathcal{H}}^{\ell}(P_{\mathrm{T}},P_{\mathrm{S}})|$ can be bounded from above by the summation of the source and target deviations:
\begin{align*}
    &\left|\varsigma_{\mathcal{H}}^{\ell}(\widehat{P}_{\mathrm{T}},\widehat{P}_{\mathrm{S}}) - \varsigma_{\mathcal{H}}^{\ell}(P_{\mathrm{T}},P_{\mathrm{S}})\right|
    \\
    &\; = \left|\sup_{h\in\mathcal{H}}\left|\widehat{P}_{\mathrm{T}}\ell\left(h,h_{\mathrm{S}}^*\right)-\widehat{P}_{\mathrm{S}}\ell\left(h,h_{\mathrm{S}}^*\right)\right| \right. \\
    &\qquad \left. - \sup_{h\in\mathcal{H}}\left|P_{\mathrm{T}}\ell\left(h,h_{\mathrm{S}}^*\right)-P_{\mathrm{S}}\ell\left(h,h_{\mathrm{S}}^*\right)\right|\right|
    \\
    &\; \le \sup_{h\in\mathcal{H}}\left|\left|(\widehat{P}_{\mathrm{T}}-\widehat{P}_{\mathrm{S}})\ell\left(h,h_{\mathrm{S}}^*\right)\right|-\left|(P_{\mathrm{T}}-P_{\mathrm{S}})\ell\left(h,h_{\mathrm{S}}^*\right)\right|\right|
    \\
    &\; \le \sup_{h,h'\in\mathcal{H}}\left|\left|(\widehat{P}_{\mathrm{T}}-\widehat{P}_{\mathrm{S}})\ell(h,h')\right|-\left|(P_{\mathrm{T}}-P_{\mathrm{S}})\ell(h,h')\right|\right|
    \\
    &\; \le \sup_{h,h'\in\mathcal{H}}\left|(\widehat{P}_{\mathrm{T}}-P_{\mathrm{T}})\ell(h,h')-(\widehat{P}_{\mathrm{S}}-P_{\mathrm{S}})\ell(h,h')\right|
    \\
    &\; \le \sup_{h,h'\in\mathcal{H}}\left|(\widehat{P}_{\mathrm{T}}-P_{\mathrm{T}})\ell(h,h')\right| + \sup_{h,h'\in\mathcal{H}}\left|(\widehat{P}_{\mathrm{S}}-P_{\mathrm{S}})\ell(h,h')\right|
    ,
\end{align*}
where the third and fourth inequalities follow from the triangular inequality.
Here, the deviations in the last line can be bounded by Lemma~\ref{lem:uniform-deviation-bound} as follows:
\begin{align*}
    &\sup_{h,h\in\mathcal{H}}\left|(\widehat{P}_{\mathrm{T}}-P_{\mathrm{T}})\ell(h,h')\right| \\
    &\qquad \le 2\mathfrak{R}_{P_{\mathrm{T}},n_{\mathrm{T}}}(\ell\circ(\mathcal{H}\otimes\mathcal{H}))+M\sqrt{\frac{\log\frac{4}{\delta}}{2n_{\mathrm{T}}}},
    \\
    &\sup_{h,h\in\mathcal{H}}\left|(\widehat{P}_{\mathrm{S}}-P_{\mathrm{S}})\ell(h,h')\right| \\
    &\qquad \le 2\mathfrak{R}_{P_{\mathrm{S}},n_{\mathrm{S}}}(\ell\circ(\mathcal{H}\otimes\mathcal{H}))+M\sqrt{\frac{\log\frac{4}{\delta}}{2n_{\mathrm{S}}}},
\end{align*}
where each of them holds with probability at least $1-\tfrac{\delta}{2}$.
This concludes the proof.
\end{proof}

\section{Proof of Lemma~\ref{lem:rademacher-bound}}
\begin{proof}
Recall that we use $\phi_i$ to denote $\phi(x_i)$ for simplicity.
The Rademacher complexity $\mathfrak{R}_{P_{\mathrm{T}},n_{\mathrm{T}}}(\mathcal{H}\otimes\mathcal{H})$ can be bounded by using the Frobenius norm $\|\cdot\|_\mathbb{F}$.
\begin{align}
    \mathfrak{R}_{P_{\mathrm{T}},n_{\mathrm{T}}}(\mathcal{H}\otimes\mathcal{H})
    &= \frac{1}{n_{\mathrm{T}}}\mathbb{E}_{x,\sigma}\left[\sup_{w,w'\in\mathbb{R}^d}\sum_{i=1}^{n_{\mathrm{T}}}\sigma_iw^\top \phi_i\cdot w'^\top \phi_i\right] \nonumber
    \\
    &= \frac{1}{n_{\mathrm{T}}}\mathbb{E}_{x,\sigma}\left[\sup_{w,w'\in\mathbb{R}^d}w^\top\left(\sum_{i=1}^{n_{\mathrm{T}}}\sigma_i\phi_i\phi_i^\top w'\right)\right] \nonumber
    \\
    &\le \frac{\Lambda}{n_{\mathrm{T}}}\mathbb{E}_{x,\sigma}\left[\sup_{w,w'\in\mathbb{R}^d}\left\|\left(\sum_{i=1}^{n_{\mathrm{T}}}\sigma_i\phi_i\phi_i^\top\right)w\right\|\right] \nonumber \\
    &\le \frac{\Lambda^2}{n_{\mathrm{T}}}\mathbb{E}_{x,\sigma}\left[\left\|\sum_{i=1}^{n_{\mathrm{T}}}\sigma_i\phi_i\phi_i^\top\right\|_\mathbb{F}\right] \nonumber
    \\
    &\le \frac{\Lambda^2}{n_{\mathrm{T}}}\sqrt{\mathbb{E}_{x,\sigma}\left[\left\|\sum_{i=1}^{n_{\mathrm{T}}}\sigma_i\phi_i\phi_i^\top\right\|_\mathbb{F}^2\right]} \nonumber
    ,
\end{align}
where the first and the second inequalities use the Cauchy-Schwartz inequality,
and the last inequality uses the Jensen's inequality.
Then,
\begin{align}
    \mathbb{E}_{x,\sigma}\left[\left\|\sum_{i=1}^{n_{\mathrm{T}}}\sigma_i\phi_i\phi_i^\top\right\|_\mathbb{F}^2\right]
    &= \mathbb{E}_{x,\sigma}\left[\sum_{i=1,j=1}^{n_{\mathrm{T}}}\sigma_i\sigma_j(\phi_i\phi_i^\top)(\phi_j\phi_j^\top)\right] \nonumber
    \\
    &= \mathbb{E}_{x}\left[\sum_{i=1}^{n_{\mathrm{T}}}\|\phi_i\phi_i^\top\|_\mathbb{F}^2\right]
    \nonumber
    ,
\end{align}
where the last equality follows from $\mathbb{E}_\sigma[\sigma_i\sigma_j]=\mathbb{E}_\sigma[\sigma_i]\mathbb{E}_\sigma[\sigma_j]=0$ for $i \ne j$.
Here,

\begin{align}
    \|\phi_i\phi_i^\top\|_\mathbb{F}^2
    &= \mathrm{tr}\left((\phi_i\phi_i^\top)^\top(\phi_i\phi_i^\top)\right) \nonumber
    \\
    &= \mathrm{tr}\left(\phi_i\phi_i^\top\phi_i\phi_i^\top\right) \nonumber
    \\
    &= \mathrm{tr}\left(\phi_i^\top\phi_i\phi_i^\top\phi_i\right) \nonumber
    \\
    &= \left(\|\phi_i\|^2\right)^2\nonumber
    \\
    &\le D_\phi^4, \nonumber
\end{align}
where the last inequality follows from the assumption $\|\phi\|_\infty \le D_\phi$.
Combining the above results, we have
\begin{align*}
    \mathfrak{R}_{P_{\mathrm{T}},n_{\mathrm{T}}}(\mathcal{H}\otimes\mathcal{H})
    &\le \frac{\Lambda^2}{n_{\mathrm{T}}}\sqrt{\mathbb{E}_x\left[\sum_{i=1}^{n_{\mathrm{T}}}\|\phi_i\phi_i^\top\|_\mathbb{F}^2\right]}
    \\
    &\le \frac{\Lambda^2}{n_{\mathrm{T}}}\sqrt{n_{\mathrm{T}}\cdot D_\phi^4}
    \\
    &= \frac{\Lambda^2D_\phi^2}{\sqrt{n_{\mathrm{T}}}},
\end{align*}
which concludes the proof.
\end{proof}

\section{Proof of Theorem~\ref{thm:generalization-bound-with-s-disc}}

\begin{proof}
    Fix one $h \in \mathcal{H}$.
    First, $\widehat{R}_{\mathrm{S}}^\ell(h,h_{\mathrm{S}}^*)=\frac{1}{n_{\mathrm{S}}}\sum_{i=1}^{n_{\mathrm{S}}}\ell(h(x_i^{\mathrm{S}}),h_{\mathrm{S}}^*(x_i^{\mathrm{S}}))$ has bounded differences, that is, for two samples $S=(x_1^{\mathrm{S}},\dots,x_i^{\mathrm{S}},\dots,x_{n_{\mathrm{S}}}^{\mathrm{S}})$ and $S'=(x_1^{\mathrm{S}},\dots,{x_i^{\mathrm{S}}}',\dots,x_{n_{\mathrm{S}}}^{\mathrm{S}})$ which differ only at the $i$-th element,
    $$
        \left|\widehat{R}_{\mathrm{S}}(h,h_{\mathrm{S}}^*) - \widehat{R}_{S'}(h,h_{\mathrm{S}}^*)\right| \le \frac{1}{n_{\mathrm{S}}}.
    $$
    Note that we consider $\ell=\ell_{01}$ here and it is bounded by $1$ from above.
    Thus, from McDiarmid's inequailty~\cite{McDiarmid:1989}, for any $\delta \in (0,1)$, with probability at least $1-\frac{\delta}{5}$,
    \begin{align}
        R_{\mathrm{S}}^\ell(h,h_{\mathrm{S}}^*) \le \widehat{R}_{\mathrm{S}}^\ell(h,h_{\mathrm{S}}^*) + \sqrt{\frac{\log\frac{5}{\delta}}{2n_{\mathrm{S}}}}.
        \label{eq:mcdiarmid-result}
    \end{align}
    In addition, from Corollary~\ref{cor:01-sipm-rate},
    \begin{align}
        &\varsigma_{\mathcal{H}}^\ell(P_{\mathrm{T}},P_{\mathrm{S}}) \nonumber \\
        &\quad \le \varsigma_{\mathcal{H}}^\ell(\widehat{P}_{\mathrm{T}},\widehat{P}_{\mathrm{S}}) + \mathfrak{R}_{P_{\mathrm{T}},n_{\mathrm{T}}}(\mathcal{H}\otimes\mathcal{H}) + \mathfrak{R}_{P_{\mathrm{S}},n_{\mathrm{S}}}(\mathcal{H}\otimes\mathcal{H}) \nonumber \\
        &\qquad + \sqrt{\frac{\log\frac{5}{\delta}}{2n_{\mathrm{T}}}} + \sqrt{\frac{\log\frac{5}{\delta}}{2n_{\mathrm{S}}}},
        \label{eq:01-s-disc-rate-result}
    \end{align}
    with probability at least $1 - \frac{4\delta}{5}$.
    Under the assumption in~\eqref{assump:rademacher}, combining Eqs.~\eqref{eq:mcdiarmid-result} and~\eqref{eq:01-s-disc-rate-result} together with Theorem~\ref{thm:generalization-bound} gives the conclusion.
\end{proof}

\section{Semidefinite Relaxation of $\mathcal{X}$-disc Computation}

First, we assume the hypothesis class $\mathcal{H}=\{\bm{\beta}^\top\bm{\phi}(\bm{x}) \mid \bm{\beta} \in \mathbb{R}^d\}$, where $\bm{\phi}: \mathcal{X} \rightarrow \mathbb{R}^d$ are basis functions.
Let $N = n_{\mathrm{T}} + n_{\mathrm{S}}$ and $\bm{a} \in \mathbb{R}^N$ be a vector with values
\begin{align*}
    a_i = \begin{cases}
        \tfrac{1}{n_{\mathrm{T}}} & 1 \le i \le n_{\mathrm{T}}, \\
        -\tfrac{1}{n_{\mathrm{S}}} & n_{\mathrm{T}}+1 \le i \le N,
    \end{cases}
\end{align*}
and $\bm{x} \in \mathcal{X}^N$ be
\begin{align*}
    x_i = \begin{cases}
        x_i^{\mathrm{S}} & 1 \le i \le n_{\mathrm{T}}, \\
        x_{i-{n_{\mathrm{T}}}}^{\mathrm{T}} & n_{\mathrm{T}}+1 \le i \le N.
    \end{cases}
\end{align*}
We can rewrite the $\mathcal{X}$-disc computation problem with the hinge loss in the following way:
\begin{align}
    \max_{\substack{\bm{\beta}, \bm{\beta}' \in \mathbb{R}^d \\ \bm{\xi} \in \mathbb{R}^N}} \bm{a}^\top\bm{\xi} \;\; \text{s.t.} \;
    \begin{cases}
        \xi_i \ge 0 & (1 \le i \le N),\\
        \xi_i \ge 1 - \bm{\beta}^\top(\bm{\phi}_i\bm{\phi}_i^\top)\bm{\beta}' & (1 \le i \le N) ,
    \end{cases}
    \label{eq:discrepancy-estimation}
\end{align}
where $\bm{\phi}_i$ is an abbreviation of $\bm{\phi}(x_i)$.
The problem~\eqref{eq:discrepancy-estimation} can be rewritten in the following way:
\begin{align}
    \max_{\bm{z}\in\mathbb{R}^{N+2d}} \bm{c}^\top\bm{z} \;\; \text{s.t.} \;
    \begin{cases}
        \bm{f}_i^\top\bm{z} \ge 0 & (1 \le i \le N),\\
        \bm{z}^\top\Phi_i\bm{z} + \bm{f}_i^\top\bm{z} \ge 1 & (1 \le i \le N),
    \end{cases}
    \label{eq:nonconvex-qcqp}
\end{align}
where
$\bm{z}=[\bm{\xi}^\top\;\bm{\beta}^\top\;\bm{\beta}'^\top]^\top$,
$\bm{c}=[\bm{a}^\top\;\bm{0}_{2d}^\top]^\top$,
$\bm{f}_i \in \mathbb{R}^{N+2d}$ are vectors with elements $f_{ij} = \delta_{ij}$ (for $1 \le i \le N$)\footnote{$\delta_{ij}$ is the Kronecker delta, that is, $\delta_{ij}=1$ if $i=j$, otherwise $\delta_{ij}=0$.}, and
$$
\Phi_i = \frac{1}{2}\left[\begin{matrix}
    O_{N \times N} & O_{N \times d} & O_{N \times d} \\
    O_{d \times N} & O_{d \times d} & \bm{\phi}_i\bm{\phi}_i^\top \\
    O_{d \times N} & \bm{\phi}_i\bm{\phi}_i^\top & O_{d \times d}
\end{matrix}\right],
$$
where $O_{k\times l}$ is $k\times l$ zero matrix.
Since the problem~\eqref{eq:nonconvex-qcqp} is a nonconvex quadratically constrained quadratic programming\footnote{The latter constraints $\bm{z}^\top\Phi_i\bm{z}+\bm{f}_i^\top\bm{z} \ge 1$ form nonconvex feasible regions, because each $\Phi_i$ is positive semidefinite.}, it is an NP-hard problem in general, and intractable.
\emph{Semidefinite relaxation~(SDR)}~\cite{fujie1997semidefinite,kim2001second} helps to make the problem~\eqref{eq:nonconvex-qcqp} tractable at the sacrifice of the exact solution, by introducing new optimization variables $Z \in \mathbb{S}^{N+2d}$:
\begin{align}
    \max_{\substack{\bm{z}\in\mathbb{R}^{N+2d} \\ Z\in\mathbb{S}^{N+2d}}} \bm{c}^\top\bm{z} \;\; \text{s.t.} \;
    \begin{cases}
        \bm{f}_i^\top\bm{z} \ge 0 & (1 \le i \le N),\\
        \langle \Phi_i, Z \rangle_{\mathbb{F}} + \bm{f}_i^\top\bm{z} \ge 1 & (1 \le i \le N),\\
        Z - \bm{z}\bm{z}^\top \succeq 0,
    \end{cases}
    \label{eq:sdp-relaxation}
\end{align}
where
$\mathbb{S}^{k}$ is the set of $k \times k$ symmetric matrices,
$\langle A,B \rangle_{\mathbb{F}} = \sum_{i,j}A_{ij}B_{ij}$ is the Frobenius inner product,
and $A \succeq 0$ means that the matrix $A$ is positive semidefinite.
Since the problem~\eqref{eq:sdp-relaxation} is \emph{semidefinite programming (SDP)}, it can be solved in polynomial time.
If the ellipsoid method is applied, the computational complexity of the problem~\eqref{eq:sdp-relaxation} is $O((n_{\mathrm{T}} + n_{\mathrm{S}} + d)^8)$~\cite{bubeck2015convex}.
\end{document}